\definecolor{mypink}{RGB}{227,29,118}
\definecolor{myblue}{RGB}{0,90,253}
\newcommand{\vct}[1]{\bm{#1}}
\newtheorem{theorem}{Theorem}
\newtheorem{lemma}[theorem]{Lemma}
\newtheorem{proposition}[theorem]{Proposition}
\newtheorem{corollary}[theorem]{Corollary}
\newtheorem{remark}[theorem]{Remark}
\newcommand{\scalprod}[2]{\left\langle #1,#2 \right\rangle}
\begin{document}

\title{Recovery guarantees for exemplar-based clustering}

\author{
Abhinav Nellore\\
{The Johns Hopkins University}\\
{anellore@gmail.com}
\and
Rachel Ward \\
{The University of Texas at Austin}\\
{rward@math.utexas.edu}
}

\maketitle
%\onecolumn
\begin{abstract}
For a certain class of distributions, we prove that the linear programming relaxation of $k$-medoids clustering---a variant of $k$-means clustering where means are replaced by exemplars from within the dataset---distinguishes points drawn from nonoverlapping balls with high probability once the number of points drawn and the separation distance between any two balls are sufficiently large. Our results hold in the nontrivial regime where the separation distance is small enough that points drawn from different balls may be closer to each other than points drawn from the same ball; in this case, clustering by thresholding pairwise distances between points can fail. We also exhibit numerical evidence of high-probability recovery in a substantially more permissive regime.
\end{abstract}
\section{Introduction}
Consider a collection of points in Euclidean space that forms roughly isotropic clusters. The \textit{centroid} of a given cluster is found by averaging the position vectors of its points, while the \textit{medoid}, or exemplar, is the point {\it from within the collection} that best represents the cluster. To distinguish clusters, it is popular to pursue the $k$-means objective: partition the points into $k$ clusters such that the average squared distance between a point and its cluster centroid is minimized. This problem is in general NP-hard \cite{aloise2009np, dasgupta2009random}. Further, it has no obvious convex relaxation, which could recover the global optimum while admitting efficient solution; practical algorithms like Lloyd's\cite{lloyd1982least} and Hartigan-Wong \cite{hartigan1979algorithm} typically converge to local optima. $k$-medoids clustering\footnote{$k$-medoids clustering is sometimes called $k$-medians clustering in the literature.} is also in general NP-hard \cite{papadimitriou1981worst, megiddo1984complexity}, but it does admit a linear programming (LP) relaxation. The objective is to select $k$ points as medoids such that the average squared distance (or other measure of dissimilarity) between a point and its medoid is minimized. \textbf{This paper obtains guarantees for exact recovery of the unique globally optimal solution to the $k$-medoids integer program by its LP relaxation.} Commonly used algorithms that may only converge to local optima include partitioning around medoids (PAM) \cite{van2003new, kaufman2009finding} and affinity propagation \cite{frey2007clustering, givoni2009binary}.

\begin{figure}[!ht]\label{yalies}
\renewcommand{\thefootnote}{\thempfootnote}
\begin{center}
\includegraphics[width=5cm]{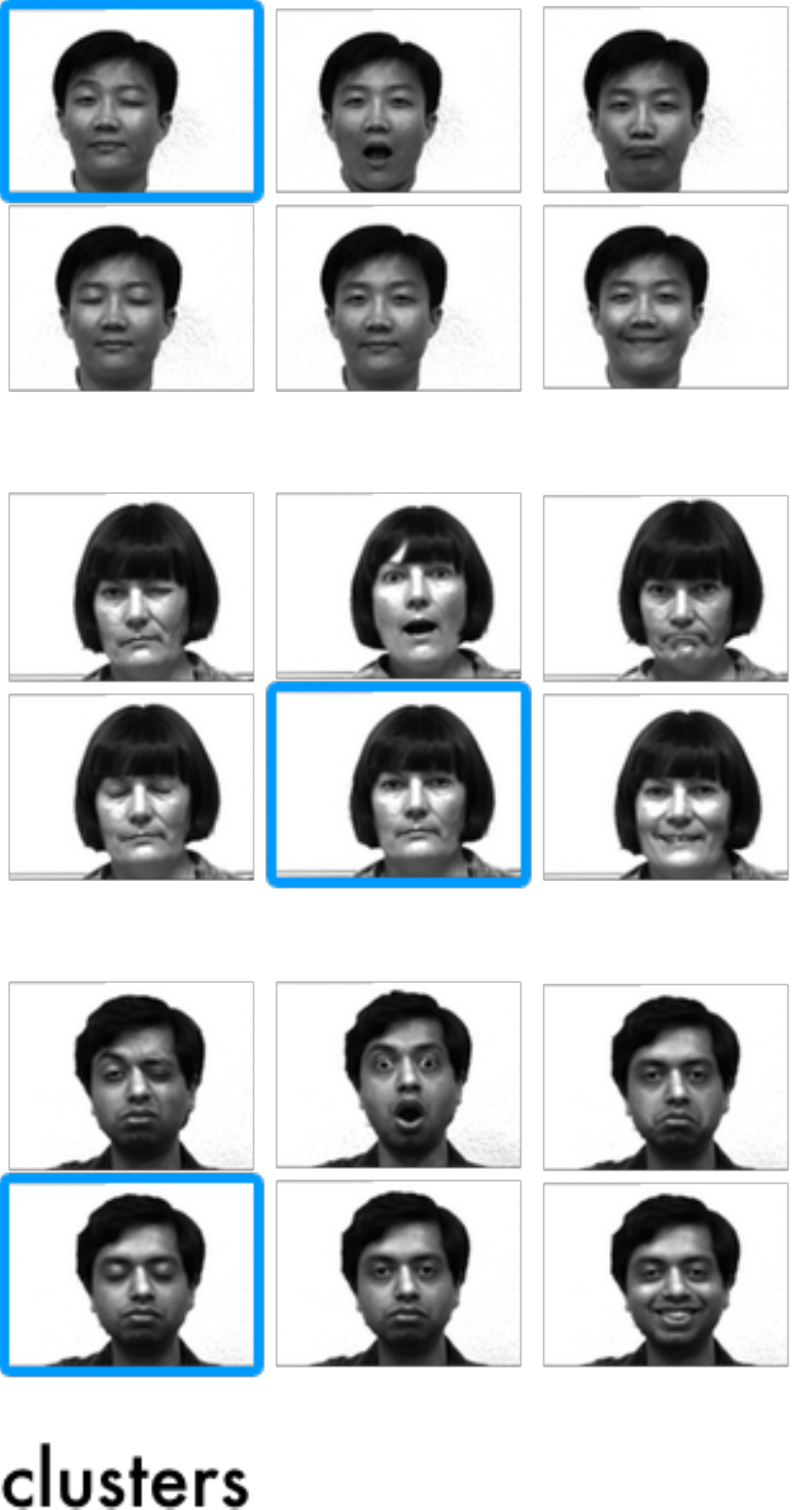} \quad \quad \quad \quad \includegraphics[width=5cm]{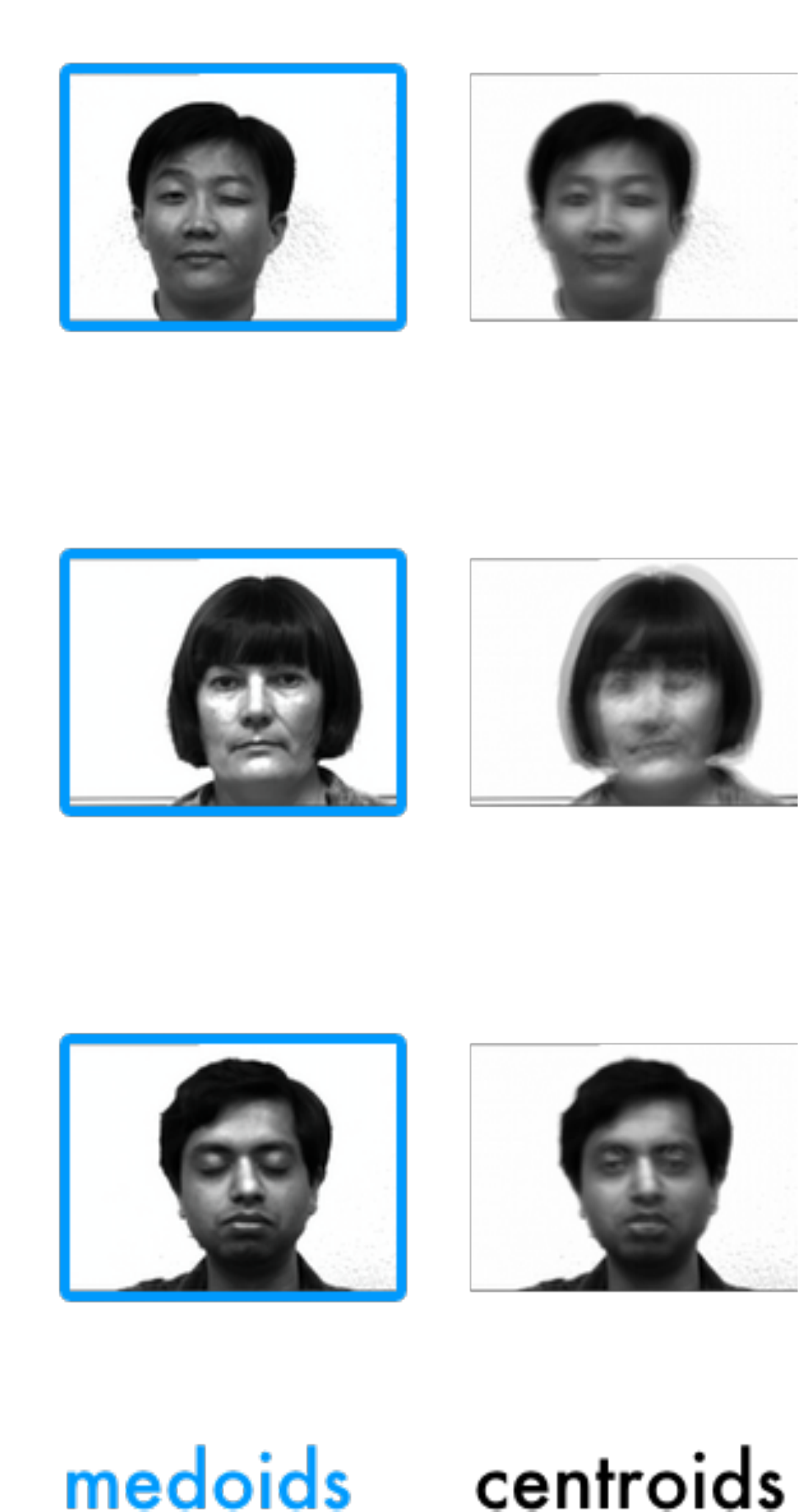} 
\end{center}
\caption{18 images of 3 faces $\times$ 6 facial expressions from the Yale Face Database were clustered using affinity propagation and Lloyd's algorithm. The medoids identified by affinity propagation (framed) are representative faces from the clusters, while the centroids found by Lloyd's algorithm are averaged faces.}
\end{figure}

To illustrate the difference between a centroid and a medoid, let us put faces to points. The Yale Face Database \cite{belhumeur1997eigenfaces} has grayscale images of several faces, each captured wearing a range of expressions---normal, happy, sad, sleepy, surprised, and winking. Suppose every point encodes an image from this database as the vector of its pixel values. Intuitively, facial expressions represent perturbations of a background composed of distinguishing image features; it is thus natural to expect that the faces cluster by individual rather than expression. Both Lloyd's algorithm and affinity propagation are shown to recover this partitioning in Figure \ref{yalies}, which also displays centroids and medoids of clusters.\footnote{$500$ randomly initialized repetitions of Lloyd's algorithm were run; the clustering that gave the smallest objective function value is shown. The package {\tt APCluster} \cite{bodenhofer2011apcluster} was used to perform affinity propagation.} The centroids are averaged faces, but the medoids are actual faces from the dataset. Indeed, applications of $k$-medoids clustering are numerous and diverse: besides finding representative faces from a gallery of images \cite{mezard2007computer}, it can group tumor samples by gene expression levels \cite{leone2007clustering} and pinpoint the influencers in a social network \cite{tang2009social}.

\subsection{Setup and principal result}
We formulate $k$-medoids clustering on a complete weighted undirected graph ${\cal G} = (V, E)$ with $N$ vertices, although recovery guarantees are proved for the case where vertices correspond to points in Euclidean space and {\bf each edge weight is the squared  $\ell_2$ distance between the points it connects.}\footnote{We use squared $\ell_2$ distances rather than unsquared $\ell_2$ distances only because we were able to derive stronger theoretical guarantees using squared $\ell_2$ distances.} Let characters in boldface (``$\mathbf{m}$'') refer to matrices/vectors and italicized counterparts with subscripts (``$m_{ij}$'') refer to matrix/vector elements. Denote as $w_{ij}$ the nonnegative weight of the edge connecting vertices $i$ and $j$, and note that $w_{ii}=0$ since ${\cal G}$ is simple. $k$-medoids clustering (\textsc{KMed}) finds the minimum-weight bipartite subgraph ${\cal G}' = ({\cal M}, V\backslash {\cal M}, E')$ of ${\cal G}$ such that $| {\cal M} |=k$ and every vertex in $V \backslash {\cal M}$ has unit degree. The vertices in ${\cal M}$ are the medoids. Expressed as a binary integer program, \textsc{KMed} is
\begin{align}
\min_{\vct{z} \in \mathbb{R}^{N \times N}} \, & \sum_{i=1}^N \sum_{j=1}^N w_{ij} z_{ij} \label{KMedObj} \\
\mbox{s.t. } &\sum_{j=1}^N z_{ij} = 1, \quad i \in [N] \label{KMed1} \\
& \sum_{j=1}^N z_{jj} \leq k \label{KMed2} \\
& z_{ij} \leq z_{jj}, \quad i, j \in [N] \label{KMed3} \\
& z_{ij} \in \{0, 1\} \label{KMed4}\,.
\end{align}
Above, $[N]$ means the set $\{1, \ldots, N\}$. When $z_{ij} = 1$, vertex $j \in {\cal M}$ serves as vertex $i$'s medoid; that is, among all edges between medoids and $i$, the edge between $j$ and $i$ has the smallest weight. Otherwise, $z_{ij} = 0$. A cluster is identified as a maximal set of vertices that share a given medoid.

Like many clustering programs, \textsc{KMed} is in general NP-hard and thus computationally intractable for a large $N$. Replacing the binary constraints (\ref{KMed4}) with nonnegativity constraints, we obtain the linear program relaxation \textsc{LinKMed}:
\begin{align}
\min_{ \vct{z} \in \mathbb{R}^{N \times N}} \, & \sum_{i=1}^N \sum_{j=1}^N w_{ij} z_{ij} \label{LinKMedObj} \\
\mbox{s.t. } &\sum_{j=1}^N z_{ij} = 1, \quad i \in [N] \label{LinKMed1}\\
& \sum_{i=1}^N z_{ii} \leq K \label{LinKMed2} \\
& z_{ij} \leq z_{jj}, \quad i, j \in [N] \label{LinKMed3} \\
& z_{ij} \geq 0 \label{LinKMed4}\,.
\end{align}

For a vector (point) $\vct{x} \in \mathbb{R}^d$, let $\| \vct{x} \|$ denote its $\ell_2$ norm. It is known that for any configuration of points in one-dimensional Euclidean space, the LP relaxation of $k$-medoids clustering invariably recovers $k$ clusters when unsquared distances are used to measure dissimilarities between points\cite{de1998k}. Therefore, we confine our attention to $d \geq 2$. The following is our main recovery result, and its proof is obtained in the third section.

\begin{theorem}\label{thm1}
Consider $k$ unit balls in $d$-dimensional Euclidean space (with $d \geq 2$) for which the centers of any two balls are separated by a distance of at least $3.75$. From each ball, draw $n$ points $\vct{x}_1, \vct{x}_2, \dots, \vct{x}_n$ as independent samples from a spherically symmetric distribution supported in the ball satisfying
\begin{equation}
\label{assumebeginner} 
{\bf Prob} ( \| \vct{x}_i \| \geq r ) \leq 1-r^2, \quad 0 \leq r \leq 1.
\end{equation}
Suppose that squared distances are used to measure dissimilarities between points, $w_{ij} = \| \vct{x}_i - \vct{x}_j \|^2$.  Then there exist values of $n$ and $k \geq 2$ for which the following statement holds: with probability exceeding $1-4k/n$, the optimal solution to $k$-medoids clustering (\textsc{KMed}) is unique and agrees with the unique optimal solution to (\textsc{LinKMed}), and assigns the points in each ball to their own cluster.
\end{theorem}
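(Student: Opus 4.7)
The plan is to apply linear programming duality and construct a dual certificate for the candidate integer primal. First I would derive the dual of \textsc{LinKMed}: introducing multipliers $\alpha_i$ for (\ref{LinKMed1}), $\lambda\geq 0$ for (\ref{LinKMed2}), and $\beta_{ij}\geq 0$ for (\ref{LinKMed3}), and taking $\beta_{ij}=(\alpha_i-w_{ij})_+$ at optimality, the dual compresses to
\begin{equation*}
\max_{\alpha,\,\lambda\geq 0}\ \sum_{i}\alpha_i-k\lambda\qquad\text{s.t.}\qquad \alpha_j+\sum_{i\neq j}(\alpha_i-w_{ij})_+\leq\lambda,\quad j\in[N].
\end{equation*}
The candidate primal picks, in each ball $b$, the sample point $m^{(b)}$ minimizing $T_b(j):=\sum_{i\in b,\,i\neq j}w_{ij}$ as the medoid, and sets $z_{im^{(b)}}=1$ for every $i$ in cluster $b$. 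Expanding squared distances yields the identity $T_b(j)-T_b(m^{(b)})=n(\|\vct{x}_j-\bar{\vct{x}}_b\|^2-\rho_b^2)$, where $\bar{\vct{x}}_b$ is the sample mean of cluster $b$ and $\rho_b:=\|\vct{x}_{m^{(b)}}-\bar{\vct{x}}_b\|$; so $m^{(b)}$ is the sample point closest to $\bar{\vct{x}}_b$.

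For the dual certificate I would try, for $i$ in cluster $b$, the ansatz $\alpha_i=w_{im^{(b)}}+\gamma$ together with $\lambda=n\gamma$ for a tuning parameter $\gamma>0$. This automatically matches the dual and primal objectives (both equal $\sum_a T_a(m^{(a)})$), so only feasibility remains to check. At a medoid $j=m^{(b)}$ the constraint collapses to the demand that every cross-cluster gap $w_{im^{(b)}}-w_{im^{(a)}}$ with $i\in a\neq b$ exceed $\gamma$; the $3.75$-separation of ball centers, combined with each $m^{(a)}$ lying close to its ball's center, yields gaps of size roughly $(3.75-1)^2-1^2\approx 6$. At a non-medoid $j\in b$ the cross-cluster terms again vanish, and after simplification using the identity above together with the triangle bound $w_{jm^{(b)}}\leq(\|\vct{x}_j-\bar{\vct{x}}_b\|+\rho_b)^2$, the constraint reduces to the clean sufficient condition $\gamma\geq 4\rho_b^2$.

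The main obstacle is to fit $\gamma$ between $4\rho_b^2$ and the cross-cluster margin of $\approx 6$ simultaneously for all clusters, with probability at least $1-4k/n$. The tail hypothesis (\ref{assumebeginner}) gives $\mathrm{Prob}(\|\vct{x}-\vct{c}^{(b)}\|<r)\geq r^2$ for a sample $\vct{x}$ drawn in the ball centered at $\vct{c}^{(b)}$, so the probability that no sample in ball $b$ lies within $r$ of $\vct{c}^{(b)}$ is at most $(1-r^2)^n\leq e^{-nr^2}$; combined with a Chebyshev bound on $\|\bar{\vct{x}}_b-\vct{c}^{(b)}\|$, this controls both $\|\vct{x}_{m^{(b)}}-\vct{c}^{(b)}\|$ and $\rho_b$ with failure probability $O(1/n)$ per cluster. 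Picking $r^2$ of order $(\log n)/n$ drives $4\rho_b^2\to 0$ as $n$ grows, so $\gamma$ can sit comfortably between the two bounds; a union bound across the $k$ clusters and the handful of sub-events per cluster then produces the $4k/n$ estimate. Uniqueness of the common optimum of \textsc{KMed} and \textsc{LinKMed} follows from strict complementary slackness, which is inherited from the strict inequalities $\gamma>4\rho_b^2$ and (cross-cluster gap) $>\gamma$ built into the construction.
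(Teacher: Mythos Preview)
Your dual formulation and the ansatz $\alpha_i=w_{im^{(b)}}+\gamma$, $\lambda=n\gamma$ are exactly the paper's certificate (this is Corollary~\ref{forrecursion} with $u=n\gamma$). The gap is in your treatment of the non-medoid constraint. You assert that at a non-medoid $j\in b$ the cross-cluster terms $(w_{im^{(a)}}+\gamma-w_{ij})_+$ vanish \emph{and} that the remaining in-cluster sum reduces to $\gamma\geq 4\rho_b^2$. These two claims are incompatible at $R=3.75$. Killing every cross-cluster term forces $\gamma\leq\min_{i\in a,\,j\in b,\,a\neq b}\big(w_{ij}-w_{im^{(a)}}\big)$, and with $j$ and $i$ on adjacent boundaries and medoids near their centers this minimum is at most $(R-2)^2-(1+\rho)^2\approx 2$. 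But for such $\gamma$ the in-cluster terms $w_{im^{(b)}}+\gamma-w_{ij}$ are not all positive (take $i,j$ antipodal on the boundary: the term is $\approx 1+\gamma-4<0$), so the $(\cdot)_+$ clips them and your identity $T_b(j)-T_b(m^{(b)})=n(\|\vct{x}_j-\bar{\vct{x}}_b\|^2-\rho_b^2)$ no longer upper-bounds the sum. The residual $\sum_{i\in b}(w_{ij}-w_{im^{(b)}}-\gamma)_+$ is $\Theta(n)$ when $j$ sits on the boundary, and comparing it to $n(\|\vct{x}_j-\bar{\vct{x}}_b\|^2-\rho_b^2)$ is a genuine probabilistic problem that does not collapse to $\gamma\geq 4\rho_b^2$.

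The paper resolves this tension in the opposite direction: it takes $\gamma$ \emph{large}, namely $u/n=4-(1-\rho)^2\approx 3$, so that every in-cluster term is positive and the identity applies cleanly (this is Corollary~\ref{usedcorollary}). It then accepts that some cross-cluster terms survive at non-medoids $j$ near the boundary, and bounds their total: the tail hypothesis \eqref{assumebeginner} is used not only to place medoids near ball centers but, crucially, to show via Hoeffding that at most $n(1-(R-1-2\sqrt{1+\rho})^2)+O(\sqrt{n\log n})$ points per ball lie in the outer shell where they can contribute, and each contributes at most $(1+\rho)^2-(R-2)^2+4-(1-\rho)^2$. This shell-versus-core decomposition, together with a lower bound on $\sum_{i\in b}(w_{ij}-w_{im^{(b)}})$ coming from Lemma~\ref{lem:bern} (whose proof that the medoid is close to the center is itself nontrivial and not covered by your Chebyshev sketch), is the substance of the argument and is absent from your plan.
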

\begin{remark}
The uniform distribution satisfies \eqref{assumebeginner} in dimension $d=2$, but for $d > 2$, a distribution satisfying \eqref{assumebeginner} concentrates more probability mass towards the center of the ball.  This means that the recovery results of Theorem \ref{thm1} are stronger for smaller $d$. However, by applying a random projection, $n$ points in $d$ dimensions can be projected into $m = \mathcal{O}(\log n /\varepsilon^2)$ dimensions while preserving pairwise Euclidean distances up to a multiplicative factor $1 \pm \varepsilon$.  In this sense, clustering problems in high-dimensional Euclidean space can be reduced to problems in low-dimensional Euclidean space\cite{bzd10}.
\end{remark}

\begin{remark}
Once the centers of any two unit balls are separated by a distance of 4,  points from within the same ball are necessarily at closer distance than points from different balls.  For the k-medoid problem, cluster recovery guarantees in this regime are given in \cite{elhamifar2012finding}.  As far as the authors are aware, Theorem \ref{thm1} provides the first recovery guarantees for k-medoids beyond this regime.
\end{remark}

%\begin{remark}
%\textsc{KMed} is closely related to the convex program studied in \cite{elhamifar2012finding} (equation (4) of that paper).   Note first that (\ref{LinKMed1}) and (\ref{LinKMed3}) together may be substituted with the single constraint
%\begin{equation}
%\sum_{j=1}^N \| {\bf x}_{,j} \|_\infty \leq K\,.\label{convexconstraint}
%\end{equation}
%Above, ${\bf x}_{,j}$ refers to the $j$th column of the matrix ${\bf x}$, and $\| \cdot \|_\infty$ is the max norm. Generalize the max norm to a $q$-norm:
%\begin{equation}
%\sum_{j=1}^N \| {\bf x}_{,j}\|_q \leq K\,. \label{altconvex}
%\end{equation}
%Remove the binary constraint (\ref{KMed4}). Now introduce the Lagrange multiplier $\lambda$, and dualize (\ref{altconvex}):
%\begin{align}
%\min_{{\bf x} \in \mathbb{R}^{N \times N}, \lambda \geq 0} \, & \sum_{i=1}^N \sum_{j=1}^N w_{ij} x_{ij} + \lambda \left(\sum_{j=1}^N \|{\bf x}_{,j}\|_q\ - K\right) \label{OtherMed0} \\
%\mbox{s.t. } &\sum_{j=1}^N x_{ij} = 1, \quad i \in [N] \label{OtherMed1}\\
%& x_{ij} \geq 0 \label{OtherMed4}\,.
%\end{align}
%Rather than minimizing with respect to $\lambda$, consider it a regularization parameter that is set \textit{a priori}. This parameter controls how many medoids are chosen: increasing $\lambda$ penalizes increasing the number of medoids. Observe that ${\bf x}$ at the optimum of (\ref{OtherMed0}-\ref{OtherMed4}) would remain the same if the term with $K$ were dropped from (\ref{OtherMed0}). So the program gives the same optimum as equation (4) of  \cite{elhamifar2012finding}.
%\end{remark}

\subsection{Relevant works}

While the literature on clustering is extensive, three lines of inquiry are closely related to the results contained here.
\begin{itemize}
\item \textbf{Recovery guarantees for clustering by convex programming.} Our work is aligned in spirit with the tradition of the compressed sensing community, which has sought probabilistic recovery guarantees for convex relaxations of nonconvex problems. Reference \cite{ames2012guaranteed} presents such guarantees for the densest $k$-clique problem \cite{ames2010convex}: partition a complete weighted graph into $k$ disjoint cliques so that the sum of their average edge weights is minimized. Also notable are \cite{oymak2011finding, jalali2012clustering, chen2012clustering, jalali2011clustering}, which find recovery guarantees for correlation clustering \cite{bansal2004correlation} and variants. Correlation clustering outputs a partitioning of the vertices of a complete graph whose edges are labeled either ``$+$'' (agreement) or ``$-$'' (disagreement); the partitioning minimizes the number of agreements within clusters plus the number of disagreements between clusters.

In all papers mentioned in the previous paragraph, the probabilistic recovery guarantees apply to the stochastic block model (also known as the planted partition model) and generalizations. Consider a graph with $N$ vertices, initially without edges. Partition the vertices into $k$ clusters. The stochastic block model \cite{condon2001algorithms, holland1983stochastic} is a random model that draws each edge of the graph \textit{independently}: the probability of a ``$+$'' (respectively, ``$-$'') edge between two vertices in the same cluster is $p$ (respectively, $1-p$), and the probability of a ``$+$'' (respectively, ``$-$'') edge between two vertices in different clusters is $q < p$ (respectively, $1-q > 1-p$). Unfortunately, any model in which edge weights are drawn independently does not include graphs that represent points drawn independently in a metric space. For these graphs, the edge weights are \textit{interdependent} distances.

A recent paper \cite{soltanolkotabi2013robust} builds on \cite{elhamifar2012sparse, elhamifar2009sparse} to derive probabilistic recovery guarantees for subspace clustering: find the union of subspaces of $\mathbb{R}^d$ that lies closest to a set of points. This problem has only trivial overlap with ours; exemplars are ``zero-dimensional hyperplanes'' that lie close to clustered points, but there is only one zero-dimensional \textit{subspace} of $\mathbb{R}^d$---the origin. Reference \cite{elhamifar2012finding}, on the other hand, introduces a tractable convex program that does find medoids. This program can be recast as a dualized form of $k$-medoids clustering. However, the deterministic guarantee of \cite{elhamifar2012finding}:
\begin{enumerate}
\item applies only to the case where the clusters are recoverable by thresholding pairwise distances; that is, two points in the same cluster must be closer than two points in different clusters. Our probabilistic guarantees include a regime where such thresholding may fail.
\item specifies that a regularization parameter $\lambda$ in the objective function must be lower than some critical value for medoids to be recovered. $\lambda$ is essentially a dual variable associated with the $k$ of $k$-medoids, and it remains unchosen in the Karush-Kuhn-Tucker conditions used to derive the guarantee of \cite{elhamifar2012finding}. The number of medoids obtained is thus unspecified. By contrast, we guarantee recovery of a specific number of medoids.
\end{enumerate}

\item \textbf{Recovery guarantees for learning mixtures of Gaussians.} We derive recovery guarantees for a random model where points are drawn from isotropic distributions supported in nonoverlapping balls. This is a few steps removed from a Gaussian mixture model. Starting with the work of Dasgupta \cite{dasgupta1999learning}, several papers (a representative sample is \cite{sanjeev2001learning, vempala2004spectral, kannan2005spectral, achlioptas2005spectral, feldman2006pac, brubaker2009robust, belkin2009learning, chaudhuri2009learning}) already report probabilistic recovery guarantees for learning the parameters of Gaussian mixture models using algorithms unrelated to convex programming. Hard clusters can be found after obtaining the parameters by associating each point $i$ with the Gaussian whose contribution to the mixture model is largest at $i$. The questions here are different from our ours: under what conditions does a given polynomial-time algorithm---not a convex program, which admits many algorithmic solution techniques---recover the global optimum? How close are the parameters obtained to their true values? The progression of this line of research had been towards reducing the separation distances between the centers of the Gaussians in the guarantees; in fact, the separation distances can be zero if the covariance matrices of the Gaussians differ \cite{kalai2010efficiently, belkin2010polynomial}. Our results are not intended to compete with these guarantees. Rather, we seek to provide complementary insights into how often clusters of points in Euclidean space are recovered by LP.

\item \textbf{Approximation algorithms for $k$-medoids clustering and facility location.} As mentioned above, for any configuration of points in one-dimensional Euclidean space, the LP relaxation of $k$-medoids clustering exactly recovers medoids for dissimilarities that are unsquared distances \cite{de1998k}. In more than one dimension, nonintegral optima whose costs are lower than that of an optimal integral solution may be realized. There is a large literature on approximation algorithms for $k$-medoids clustering based on rounding the LP solution and other methods. This literature encompasses a family of related problems known as facility location. The only differences between the uncapacitated facility location problem (UFL) and $k$-medoids clustering are that 1) only certain points are allowed to be medoids, 2) there is no constraint on the number of clusters, and 3) there is a cost associated with choosing a given point as a medoid.

Constant-factor approximation algorithms have been obtained for metric flavors of UFL and $k$-medoids clustering, where the measures of distance between points used in the objective function must obey the triangle inequality. Reference \cite{shmoys1997approximation} obtains the first polynomial-time approximation algorithm for metric UFL; it comes within a factor of 3.16 of the optimum. Several subsequent works give algorithms that improve this approximation ratio \cite{guha1998greedy, korupolu1998analysis, charikar1999improved, mahdian2001greedy, jain2002new, chudak2003improved, jain2003greedy, sviridenko2006improved, mahdian2006approximation, byrka2007optimal}. It is established in \cite{guha1998greedy} that unless $\NP \subseteq \DTIME(n^{O(\log \log n)})$, the lower bounds on approximation ratios for metric UFL and metric $k$-medoids clustering are, respectively, $\alpha\approx1.463$ and $1+2/e\approx  1.736$. Here, $\alpha$ is the solution to $\alpha+1=\ln 2/\alpha$. In unpublished work, Sviridenko strengthens the complexity criterion for these lower bounds to $\P \neq \NP$.\footnote{See Theorem 4.13 of Vygen's notes \cite{vygen2005approximation} for a proof. We thank Shi Li for drawing our attention to this result.} The best known approximation ratios for metric UFL and metric $k$-medoids clustering are, respectively, $1.488$ \cite{li20121} and $1+\sqrt{3} + \epsilon\approx 2.732+\epsilon$ \cite{li2013approximating}. Before the 2012 paper \cite{li2013approximating}, only a $\left(3 + \epsilon\right)$-approximation algorithm had been available since 2001 \cite{arya2004local}. Because there is still a large gap between the current best approximation ratio for $k$-medoids clustering ($2.732$) and the theoretical limit ($1.736$), finding novel approximation algorithms remains an active area of research. Along related lines, a recent paper \cite{hajiaghayi2013constant} gives the first constant-factor approximation algorithm for a generalization of $k$-medoids clustering in which more than one medoid can be assigned to each point.

We emphasize that our results are recovery guarantees; instead of finding a novel rounding scheme for LP solutions, we give precise conditions for when solving an LP yields the $k$-medoids clustering. In addition, our proofs are for squared distances, which do not respect the triangle inequality.

\end{itemize}

\subsection{Organization}

The next section of this paper uses linear programming duality theory to derive sufficient conditions under which the optimal solution to the $k$-medoids integer program \textsc{KMed} coincides with the unique optimal solution of its linear programming relaxation \textsc{LinKMed}. The third section obtains probabilistic guarantees for exact recovery of an integer solution by the linear program, focusing on recovering clusters of points drawn from separated balls of equal radius. Numerical experiments demonstrating the efficacy of the linear programming approach for recovering clusters beyond our analytical results are reviewed in the fourth section. The final section discusses a few open questions, and an appendix contains one of our proofs.

\section{Recovery guarantees via dual certificates}

Let $M(i)$ be the index of vertex $i$'s medoid and $M(i, 2)$ be $\arg\!\min_{j \in {\cal M}, j \neq M(i)} w_{ij}$. For points in Euclidean space, $M(i, 2)$ is the index of the second-closest medoid to point $i$. For simplicity of presentation, take $w_{i, M(i,2)} = \infty$ when there is only one medoid. Denote as $S_i$ the set of points whose medoid is indexed by $i$. Let $(\, \cdot\, )_+$ refer to the positive part of the term enclosed in parentheses. Begin by writing a necessary and sufficient condition for a unique integral solution to \textsc{LinKMed}. 
\begin{proposition}\label{dualcert}
\textsc{LinKMed} has a unique optimal solution $\mathbf{x} = \mathbf{x}^\#$ that coincides with the optimal solution to \textsc{KMed} if and only if there exist some $u$ and $\vct{\lambda} \in \mathbb{R}^N$ such that
\begin{align}
u  >  \sum_{i=1}^N \left(\lambda_i - w_{ij} + w_{i, M(i)}\right)_+, \quad j \notin {\cal M} \nonumber \\
\sum_{i \in S_j} \lambda_i = u\,,\quad j \in {\cal M} \label{dual_cond}\\ 
0 \leq \lambda_i < w_{i, M(i, 2)} - w_{i, M(i)},\quad i \in [N]\,. \nonumber
\end{align}
\end{proposition}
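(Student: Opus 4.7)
The plan is to derive the LP dual of \textsc{LinKMed}, specialize the KKT conditions at the candidate integer primal $\vct z^{\#}$ defined by $z^{\#}_{i,M(i)}=1$ (and all other entries zero), and rewrite the resulting constraints in the $(u,\vct\lambda)$ parametrization of the statement. Uniqueness of $\vct z^{\#}$ will be forced by strict complementary slackness, which is exactly what the strict inequalities in the statement encode.

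Introducing dual variables $\alpha_i$ (free) for $\sum_j z_{ij}=1$, $u\ge 0$ for $\sum_i z_{ii}\le K$, and $\beta_{ij}\ge 0$ for $z_{ij}\le z_{jj}$, a routine computation gives the dual
\[
\max\ \sum_i \alpha_i - Ku \quad \text{s.t.}\quad \alpha_i \le w_{ij}+\beta_{ij}\ (i\neq j),\quad \alpha_i + \sum_{k\neq i}\beta_{ki}\le u,\quad \beta_{ij}\ge 0,\quad u\ge 0.
\]
Complementary slackness at $\vct z^{\#}$ reads as follows: stationarity at the nonzero off-diagonal entries gives $\alpha_i=w_{i,M(i)}+\beta_{i,M(i)}$; stationarity at the nonzero diagonal entries ($i\in\mathcal M$) gives $\alpha_i+\sum_{k\neq i}\beta_{ki}=u$; for $j\in\mathcal M$ and $i\notin S_j$, the primal slack $z^{\#}_{jj}-z^{\#}_{ij}=1>0$ forces $\beta_{ij}=0$; and for $j\notin\mathcal M$ the $\beta_{\cdot j}$ are unconstrained by complementary slackness, so the dual-optimizing choice is the smallest one compatible with dual feasibility, namely $\beta_{ij}=(\alpha_i-w_{ij})_+$.

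Setting $\lambda_i := \alpha_i - w_{i,M(i)}$ now translates these conditions into the Proposition's form. Nonnegativity $\beta_{i,M(i)}=\lambda_i\ge 0$ yields $\lambda_i\ge 0$; dual feasibility at $(i,j)$ for $j\in\mathcal M\setminus\{M(i)\}$ with $\beta_{ij}=0$ reads $\lambda_i\le w_{ij}-w_{i,M(i)}$, whose binding instance is $j=M(i,2)$ and hence collapses to $\lambda_i<w_{i,M(i,2)}-w_{i,M(i)}$; the diagonal equality at $i\in\mathcal M$ becomes $\sum_{k\in S_i}\lambda_k=u$ (using $\alpha_i=\lambda_i$ for $i\in\mathcal M$ and $\beta_{ki}=\lambda_k$ for $k\in S_i\setminus\{i\}$); and the dual inequality at $j\notin\mathcal M$ becomes $u\ge\sum_i(\lambda_i+w_{i,M(i)}-w_{ij})_+$, after absorbing the $i=j$ contribution $\alpha_j=\lambda_j+w_{j,M(j)}$. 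The strict versions of the last two are precisely strict complementary slackness at the nonsupport of $\vct z^{\#}$.

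For the ``if'' direction, strict complementary slackness forces any LP optimum $\vct z$ to satisfy $z_{jj}=0$ for $j\notin\mathcal M$ and $z_{ij}=0$ for $j\in\mathcal M\setminus\{M(i)\}$; coupled with $z_{ij}\le z_{jj}$ and $\sum_j z_{ij}=1$ this pins down $\vct z=\vct z^{\#}$. For the ``only if'' direction, strong duality together with a Goldman--Tucker-type argument yields a dual optimum with strict complementarity at every primal-zero coordinate, from which $(u,\vct\lambda)$ can be read off. I expect this ``only if'' direction to be the main obstacle: one must verify that a unique integral LP optimum genuinely admits a dual certificate exhibiting the \emph{strict} (not merely weak) inequalities claimed, which is where both the freedom to choose $\beta_{ij}$ for $j\notin\mathcal M$ and the primal-uniqueness assumption must be used together.
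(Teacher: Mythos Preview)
Your proposal is correct and follows essentially the same LP-duality route as the paper: both obtain a dual certificate, set the free multipliers (your $\beta_{ij}$, the paper's $\theta_{ij}$) to their least feasible values, and reparametrize to $(u,\vct\lambda)$; your substitution $\lambda_i=\alpha_i-w_{i,M(i)}=\beta_{i,M(i)}$ matches the paper's $\lambda_i$ for $i\notin\mathcal M$, and your diagonal identity $\sum_{k\in S_j}\lambda_k=u$ reproduces exactly the paper's closing definition of $\lambda_i$ for $i\in\mathcal M$. The only cosmetic difference is that the paper first eliminates the $z_{i,M(i)}$ via the equality constraints and argues uniqueness by requiring strictly positive Lagrangian gradients in the reduced variables, whereas you keep the full dual and phrase uniqueness as strict complementary slackness; your Goldman--Tucker invocation for the ``only if'' direction is precisely what the paper asserts (without naming it) when it declares the strict inequalities necessary as well as sufficient.
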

Proposition \ref{dualcert} rewrites the Karush-Kuhn-Tucker (KKT) conditions corresponding to the linear program \textsc{LinKMed} in a convenient way; refer to the appendix for a derivation. Let $N_i$ be the number of points in the same cluster as point $i$. Choose $\lambda_i = u/N_{i}$ to obtain the following tractable sufficient condition for medoid recovery.
\begin{corollary}\label{forrecursion}
\textsc{LinKMed} has a unique optimal solution $\mathbf{z} = \mathbf{z}^\#$ that coincides with the optimal solution to \textsc{KMed} if there exists a $u \in \mathbb{R}$ such that
\begin{align}
\sum_{i=1}^N \left(T_{ij}\right)_+ < u < N_{\ell} \left(w_{\ell, M(\ell, 2)}-w_{\ell, M(\ell)}\right)\label{finalu}
\end{align}
for $j\notin {\cal M}$, $\ell \in [N]$, and
$$
T_{ij} = \frac{u}{N_i}+w_{i, M(i)}-w_{ij}\,.
$$
\end{corollary}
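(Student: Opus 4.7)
The plan is to reduce Corollary \ref{forrecursion} directly to Proposition \ref{dualcert} by making the specific choice $\lambda_i = u/N_i$ and checking that each of the three conditions in \eqref{dual_cond} is implied by the hypotheses of the corollary. Since Proposition \ref{dualcert} has already been established as a necessary and sufficient condition for the desired conclusion about \textsc{LinKMed} and \textsc{KMed}, no further duality work is required; the corollary is essentially a convenient specialization.

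First I would verify the middle equation $\sum_{i \in S_j} \lambda_i = u$ for every $j \in \mathcal{M}$. This is the key reason the ansatz $\lambda_i = u/N_i$ is natural: every point $i \in S_j$ belongs to the same cluster, so $N_i = |S_j|$ for all $i \in S_j$, and the sum telescopes to $|S_j| \cdot u/|S_j| = u$. Next I would rewrite the top inequality $u > \sum_{i=1}^N (\lambda_i - w_{ij} + w_{i,M(i)})_+$ for $j \notin \mathcal{M}$; substituting $\lambda_i = u/N_i$ turns the summand into exactly $(T_{ij})_+$, so this reduces to the left-hand inequality in \eqref{finalu}.

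Finally I would handle the bottom constraint $0 \leq \lambda_i < w_{i,M(i,2)} - w_{i,M(i)}$. With $\lambda_i = u/N_i$, this becomes the requirement that $0 \leq u < N_i(w_{i,M(i,2)} - w_{i,M(i)})$ for every $i \in [N]$, which is precisely the right-hand inequality in \eqref{finalu} ranging $\ell$ over all of $[N]$; note that $u \geq 0$ is automatic since the left-hand side of \eqref{finalu} is a sum of nonnegative quantities. Assembling these three verifications yields the conclusion via Proposition \ref{dualcert}.

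There is no real obstacle here: the work is pure substitution, and the only mildly delicate point is observing that $N_i$ is constant on each cluster so the middle equation of \eqref{dual_cond} is satisfied automatically. The corollary's value lies not in its proof but in its form, because the condition \eqref{finalu} is phrased entirely in terms of the cluster sizes $N_i$ and the gap $w_{i,M(i,2)} - w_{i,M(i)}$ between the first and second closest medoids, which is the natural quantity to control probabilistically in the random-ball model used for Theorem \ref{thm1}.
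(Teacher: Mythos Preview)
Your proposal is correct and follows exactly the paper's approach: the paper simply states ``Choose $\lambda_i = u/N_{i}$ to obtain the following tractable sufficient condition,'' and you have carried out precisely that substitution into Proposition~\ref{dualcert}, verifying each of the three conditions in \eqref{dual_cond}. The only detail you add beyond the paper's one-line justification is the explicit observation that $N_i$ is constant on each cluster (so the equality $\sum_{i\in S_j}\lambda_i=u$ holds) and that $u\geq 0$ follows from the left-hand inequality in \eqref{finalu}; both are immediate and your argument is complete.
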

\begin{remark}
The choice of the KKT multipliers $\lambda_i$ made here is democratic: each cluster $S_j$ has a total of $u$ ``votes,'' which it distributes proportionally among the $\lambda_i$ for $i \in S_j$.
\end{remark}
Now consider the dual certificates contained in the following two corollaries.
\begin{corollary}\label{determcorollary}
If \textsc{KMed} has a unique optimal solution $\mathbf{z} = \mathbf{z}^\#$, \textsc{LinKMed} also has a unique optimal solution $\mathbf{z} = \mathbf{z}^\#$ when
\begin{equation}
\max_{i\in [N]}\max_{j \in S_{M(i)}} N_i\left(w_{ij} - w_{i, M(i)}\right) < \min_{i\in [N]}\min_{j \notin S_{M(i)}} N_i\left(w_{ij} - w_{i, M(i)}\right)\,.\label{generaldeterm}
\end{equation}
\end{corollary}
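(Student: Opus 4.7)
The plan is to reduce to Corollary \ref{forrecursion} by exhibiting a suitable $u$. Write $A$ and $B$ for the left and right sides of \eqref{generaldeterm} respectively, and choose any $u$ with $A < u < B$; the hypothesis makes such a choice possible. Note that the lower bound $u > 0$ implicit in \eqref{finalu} (the left side there is a sum of nonnegative terms) is automatic, since $j = M(i)$ is a legitimate index in the max defining $A$ and forces $A \geq 0$, so $B > A \geq 0$.

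With this $u$, I would verify the two inequalities of \eqref{finalu} in turn. The upper bound is immediate: for every $\ell$ the second-closest medoid $M(\ell, 2)$ lies in a different cluster from $\ell$, so $M(\ell, 2) \notin S_{M(\ell)}$ and the value $N_\ell(w_{\ell, M(\ell, 2)} - w_{\ell, M(\ell)})$ is one of the candidates in the min defining $B$; hence $u < B \leq N_\ell(w_{\ell, M(\ell, 2)} - w_{\ell, M(\ell)})$. For the lower bound, fix $j \notin \mathcal{M}$ and split $\sum_i (T_{ij})_+$ according to whether $i$ lies in $j$'s cluster $C_j = S_{M(j)}$. When $i \notin C_j$, the pair $(i,j)$ is among those in the min defining $B$, so $u < N_i(w_{ij} - w_{i,M(i)})$ forces $T_{ij} < 0$ and the positive part is zero. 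When $i \in C_j$ one has $N_i = |C_j|$ and $M(i) = M(j)$, and $(i,j)$ is among those in the max defining $A$, so $u > A$ makes $T_{ij} > 0$ and the positive part equals $T_{ij}$ itself. Collapsing $N_i = |C_j|$, the sum becomes $u + \sum_{i \in C_j}(w_{i, M(j)} - w_{ij})$, so the target $\sum_i (T_{ij})_+ < u$ reduces to
\[
\sum_{i \in C_j}(w_{i, M(j)} - w_{ij}) < 0 \qquad \text{for every } j \notin \mathcal{M}.
\]

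This last inequality is the substantive step, and I expect it to be the main obstacle: it is where uniqueness of the \textsc{KMed} optimum enters, and it must be strict. Given $j \notin \mathcal{M}$, I would construct an alternative integer-feasible configuration by swapping $M(j)$ out of the medoid set and $j$ in, leaving the other $k-1$ clusters and their medoids intact and reassigning every vertex of $C_j$ (including the former medoid $M(j)$) to the new medoid $j$. The objective value of this competitor differs from that of $\mathbf{z}^\#$ by exactly $\sum_{i \in C_j}(w_{ij} - w_{i, M(j)})$, and it is a distinct feasible point because it places $j$ in the medoid set while $\mathbf{z}^\#$ does not. Uniqueness of the integer optimum then forces the gap to be strictly positive, yielding precisely the inequality above. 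The delicate aspect of this final step is verifying that the competitor is honestly distinct from $\mathbf{z}^\#$ and genuinely feasible even when $j$ happens to sit closer to some points outside $C_j$ than to their assigned medoids; the condition $u < B$ from the previous paragraph is what absorbs exactly this difficulty in the dual-certificate half of the argument.
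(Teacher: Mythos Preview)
Your argument is correct and follows the same route as the paper: both impose the sign pattern $T_{ij}>0$ for same-cluster pairs and $T_{ij}<0$ for cross-cluster pairs, reduce the left inequality of \eqref{finalu} to $\sum_{i\in S_{M(j)}}(w_{ij}-w_{i,M(j)})>0$, and invoke uniqueness of the \textsc{KMed} optimum to secure strictness. Your explicit competitor construction (swap $M(j)$ for $j$ as medoid, keep the partition) is just a spelled-out version of the paper's one-line appeal to ``the definition of a medoid''; the remark at the end about $u<B$ absorbing feasibility difficulties is unnecessary, since the competitor is feasible for \textsc{KMed} regardless of where $j$ sits relative to other clusters.
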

\begin{remark}\label{refdremark}
Choose points from within $k$ balls in $\mathbb{R}^d$, each of radius $r$, for which the centers of any two balls are separated by a distance of at least $R$. Measure $R$ in units of a ball's radius by setting $r=1$. Take $w_{ij} = (d_{ij})^p$, where $d_{ij}$ is the distance between points $i$ and $j$ and $p > 0$. The inequality \eqref{generaldeterm} is satisfied for
\begin{equation}\label{determresult}
R > 2 \left(1+\left(1+\frac{n_{max}}{n_{min}}\right)^{1/p}\right)
\end{equation}
by assigning the points chosen from each ball to their own cluster. Here, $n_{max}$ and $n_{min}$ are the maximum and minimum numbers of points drawn from any one of the balls, respectively. In the limit $p \to \infty$, \eqref{determresult} becomes $R > 4$.
\end{remark}
\begin{proof}
Impose both $T_{ij} \geq 0$ when points $i$ and $j$ are in the same cluster and $T_{ij} < 0$ when points $i$ and $j$ are in different clusters. Combined with \eqref{finalu}, the restrictions on $u$ are then
\begin{align}
\sum_{i \in S_{M(j)}}\left(w_{ij} - w_{i, M(i)}\right) > 0\,,\quad j \notin {\cal M} \label{weightcond}\\
N_i \left(w_{ij} - w_{i, M(i)}\right) \leq u < N_i \left(w_{i\ell} - w_{i, M(i)}\right)\,, \quad i \in [N]; j\in S_{M(i)}; \ell \notin S_{M(i)}\,. \label{bothcond}
\end{align}
Condition \eqref{weightcond} holds by definition of a medoid {\it unless the optimal solution to \textsc{KMed} itself is not unique.} In that event, it may be possible for a nonmedoid and a medoid in the same cluster to trade roles while maintaining solution optimality, making the LHS of \eqref{weightcond} vanish for some $j$. The phrasing of the corollary accommodates this edge case.
\end{proof}
\begin{remark}
The inequality \eqref{bothcond} requires $w_{ij} < w_{i\ell}$ for $i$ in the same cluster as $j$ but a different cluster from $\ell$. So any two points in the same cluster must be closer than any two points in different clusters. 
\end{remark}
Corollary \ref{determcorollary} does not illustrate the utility of LP for solving \textsc{KMed}. Given the conditions of a recovery guarantee, clustering could be performed without LP using some distance threshold $d_t$: place two points in the same cluster if the distance between them is smaller than $d_t$, and ensure two points are in different clusters if the distance between them is greater than $d_t$. In the separated balls model of Remark \ref{refdremark}, $R > 4$ guarantees that two points in the same ball are closer than two points in different balls. The next corollary is needed to obtain results for $R \leq 4$.
\begin{corollary}\label{usedcorollary}
Let
\begin{equation}
u = \max_{i\in [N]}\max_{j \in S_{M(i)}} N_i\left(w_{ij} - w_{i, M(i)}\right)\,.
\end{equation}
\textsc{LinKMed} has a unique optimal solution $\mathbf{z} = \mathbf{z}^\#$ that coincides with the optimal solution to \textsc{KMed} if
\begin{align}
u < N_i \left(w_{i, M(i,2)} - w_{i, M(i)}\right)\,,\quad i \in [N] \label{finalTijgreaterthan0}\\
\sum _{i \notin S_{M(j)}} \left(\frac{u}{N_i}+w_{i, M(i)} - w_{ij}\right)_+ < \sum_{i \in S_{M(j)}} \left(w_{ij} - w_{i, M(i)}\right)\,,\quad j \notin {\cal M}.\label{bigsuffcond}
\end{align}
\end{corollary}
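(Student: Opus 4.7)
My plan is to deduce Corollary \ref{usedcorollary} directly from Corollary \ref{forrecursion} by showing that the particular choice
\[
u = \max_{i\in [N]}\max_{j \in S_{M(i)}} N_i\bigl(w_{ij} - w_{i, M(i)}\bigr)
\]
satisfies both sides of the sandwich \eqref{finalu} precisely under hypotheses \eqref{finalTijgreaterthan0} and \eqref{bigsuffcond}. The right-hand inequality of \eqref{finalu} is immediate: it is literally hypothesis \eqref{finalTijgreaterthan0}. So the whole task reduces to rewriting the left-hand inequality $\sum_{i=1}^N (T_{ij})_+ < u$ for each $j \notin \mathcal{M}$ in the form \eqref{bigsuffcond}.

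To do that, I would first observe that by the very definition of $u$, for every pair $(i,j)$ with $j \in S_{M(i)}$ we have $u/N_i \geq w_{ij} - w_{i,M(i)}$, i.e.\ $T_{ij} \geq 0$. Hence for such pairs the positive part is redundant and $(T_{ij})_+ = T_{ij}$. Next, I would split the sum according to whether $i$ lies in the cluster of $j$:
\[
\sum_{i=1}^N (T_{ij})_+ = \sum_{i \in S_{M(j)}} T_{ij} + \sum_{i \notin S_{M(j)}} (T_{ij})_+.
\]
For the first piece, every $i \in S_{M(j)}$ has $M(i) = M(j)$ and $N_i = |S_{M(j)}|$, so $\sum_{i \in S_{M(j)}} u/N_i = u$ exactly, leaving
\[
\sum_{i \in S_{M(j)}} T_{ij} = u - \sum_{i \in S_{M(j)}}\bigl(w_{ij} - w_{i,M(i)}\bigr).
\]
Substituting this back and cancelling $u$ on both sides shows that the requirement $\sum_{i=1}^N (T_{ij})_+ < u$ is equivalent to \eqref{bigsuffcond}. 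Combined with \eqref{finalTijgreaterthan0}, Corollary \ref{forrecursion} then applies and delivers the conclusion.

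There is essentially no obstacle here beyond careful bookkeeping; the only subtle point is recognizing that my choice of $u$ is calibrated exactly so that $T_{ij}$ is automatically nonnegative for within-cluster pairs, which is what lets the $u$ inside the positive-part sum cancel against the $u$ on the right-hand side of \eqref{finalu}. The nontrivial content of the corollary is therefore not in the manipulation but in the \emph{choice} of $u$: taking the maximum of $N_i(w_{ij}-w_{i,M(i)})$ over within-cluster pairs makes \eqref{finalTijgreaterthan0} tight enough to be meaningful while keeping \eqref{bigsuffcond} checkable in terms of cluster-by-cluster comparisons. This is what will later allow the probabilistic estimates of Section~3 to certify recovery in the regime $R \leq 4$ where the deterministic thresholding guarantee of Corollary~\ref{determcorollary} breaks down.
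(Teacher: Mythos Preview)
Your proposal is correct and follows essentially the same approach as the paper: both impose $T_{ij}\ge 0$ for within-cluster pairs via the specific choice of $u$, split $\sum_i (T_{ij})_+$ into the within-cluster and cross-cluster pieces, observe that the within-cluster piece equals $u-\sum_{i\in S_{M(j)}}(w_{ij}-w_{i,M(i)})$, and cancel $u$ against the right side of \eqref{finalu}. Your write-up is in fact more explicit than the paper's, which simply records the resulting conditions \eqref{Tijgreaterthan0}--\eqref{needpos2} and then says to take $u$ at its lower bound.
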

\begin{proof}
Impose only $T_{ij} > 0$ when points $i$ and $j$ are in the same cluster so that together with \eqref{finalu}, the restrictions on $u$ are
\begin{align}
\sum _{i \notin S_{M(j)}} \left(\frac{u}{N_i}+w_{i, M(i)} - w_{ij}\right)_+ < \sum_{i \in S_{M(j)}} \left(w_{ij} - w_{i, M(i)}\right)\,,\quad j \notin {\cal M}\label{Tijgreaterthan0}\\
N_i\left(w_{ij} - w_{i, M(i)}\right) < u < N_{i} \left(w_{i, M(i, 2)}-w_{i, M(i)}\right)\,,\quad i \in [N]; j \notin {\cal M}\,.\label{needpos2}
\end{align}
To minimize the LHS of \eqref{Tijgreaterthan0}, choose $u$ so it approaches its lower bound..
\end{proof}
\begin{remark}
The inequality \eqref{needpos2} requires $w_{i, M(i,2)} > w_{ij}$ for $i$ and $j$ in the same cluster.
\end{remark}

Corollary \ref{determcorollary} imposes both extra upper bounds and extra lower bounds on $u$ in its proof. When two points in different clusters are closer than two points in the same cluster, $u$ cannot simultaneously satisfy these upper and lower bounds. To break this ``thresholding barrier,'' Corollary \ref{usedcorollary} imposes only extra lower bounds on $u$ and permits large $u$. Stronger recovery guarantees are obtained for large $u$ when medoids are sparsely distributed among the points. (Note that the optimal solution $\mathbf{z} = \mathbf{z}^\#$ is $k$-column sparse.) The next subsection obtains probabilistic guarantees using Corollary \ref{usedcorollary} for a variant of the separated balls model of Remark \ref{refdremark}.

\section{A recovery guarantee for separated balls}

The theorem stated in the introduction is proved in this section. Consider $k$ nonoverlapping $d$-dimensional unit balls in Euclidean space for which the centers of any two balls are separated by a distance of at least $R$. Take $w_{ij}$ to be the squared distance $d_{ij}^2 = \| \vct{x}_i - \vct{x}_j \|^2$ between points $\vct{x}_i$ and $\vct{x}_j$. Under a mild assumption about how points are drawn within each ball, the exact recovery guarantee of Remark \ref{refdremark} is extended in this subsection to the regime $R < 4$, where two points in the same cluster are not necessarily closer to each other than two points in different clusters. In particular, let the points in each ball correspond to independent samples of an isotropic distribution supported in the ball and which obeys
\begin{equation}\label{distassumption}
{\bf Prob}\left(\| \vct{x} \| \geq r \right) \leq 1-r^2, \quad 0 \leq r \leq 1\,,
\end{equation}
Above, $\vct{x} \in \mathbb{R}^d$ is the vector extending from the center of the ball to a given point, and $\| \vct{x} \|$ refers to the $\ell_2$ norm of $\vct{x}$. In $d=2$ dimensions, the assumption \eqref{distassumption} holds for the uniform distribution supported in the ball. For larger $d$, \eqref{distassumption} requires distributions that concentrate more probability mass closer to the ball's center.  For simplicity, we assume in the sequel that  the number $n$ of points drawn from each ball is equal. 
\noindent Let $\mathbb{E}$ denote an expectation and ${\bf Var}$ a variance. We state a preliminary lemma. 
\begin{lemma}\label{lem:bern}
Consider $\vct{x}_1, \dots, \vct{x}_n \in \mathbb{R}^d$ sampled independently from an isotropic distribution supported in a $d$-dimensional unit ball which satisfies
$$
\mathbf{Prob}(\| \vct{x}_i \| \geq  r) \leq 1 - r^2, \quad 0 \leq r \leq 1; i \in [n]\,.
$$
Use squared Euclidean distances to measure dissimilarities between points. Let $\vct{x}_{*}$ be the medoid of the set $\{\vct{x}_i\}$ and $\vct{x}_{min}=\operatornamewithlimits{argmin} \limits_{j} \| \vct{x}_j \|$. Assume that $d \geq 2$, $n \geq 3$, and $ 0 < \alpha \leq (3/2)\sqrt{(n-2)/2d}$. With probability exceeding $1 - ne^{-\alpha^2}$, all of the following statements are true.
\begin{enumerate}
\item $\sum_{j=1}^n (\| \vct{x}_j - \vct{x}_{\ell} \|^2 - \| \vct{x}_j - \vct{x}_{*} \|^2 ) \geq \sum_{j=1}^n (\| \vct{x}_j - \vct{x}_{\ell} \|^2 - \| \vct{x}_j - \vct{x}_{min} \|^2 ) \geq (n-2)\left(\|  \vct{x}_{\ell} \|^2 - \|  \vct{x}_{min} \|^2\right) -2\alpha \sqrt{n-2}\left(\|\vct{x}_{\ell} \| + \|\vct{x}_{min}\|\right)$ for all $\ell \in [n]$.
\item $\| \vct{x}_{min} \|  \leq  \alpha n^{-1/2}$.
\item $\| \vct{x}_{*} \| \leq 3 \alpha\left(n-2\right)^{-1/2}$.
\end{enumerate}
\end{lemma}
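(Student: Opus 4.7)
I would prove statement (2) first (since it is used to finish (3)), then the main statement (1), and finally deduce (3). Statement (2) is direct: the tail hypothesis gives $\mathbf{Prob}(\|\vct{x}_i\| < r) \geq r^2$, and independence yields $\mathbf{Prob}(\|\vct{x}_{min}\| > r) \leq (1-r^2)^n \leq e^{-nr^2}$, so $r = \alpha n^{-1/2}$ gives failure probability at most $e^{-\alpha^2}$.

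For statement (1), the first inequality is just the defining property of the medoid $\vct{x}_*$. For the second, the key algebraic identity is
\[
\|\vct{x}_j - \vct{x}_\ell\|^2 - \|\vct{x}_j - \vct{x}_{min}\|^2 = \|\vct{x}_\ell\|^2 - \|\vct{x}_{min}\|^2 - 2\langle \vct{x}_j, \vct{x}_\ell - \vct{x}_{min}\rangle,
\]
whose $j = \ell$ and $j = min$ summands equal $\mp\|\vct{x}_\ell - \vct{x}_{min}\|^2$ and so cancel when summed over $j \in [n]$. The full sum therefore equals
\[
(n-2)(\|\vct{x}_\ell\|^2 - \|\vct{x}_{min}\|^2) - 2\langle \textstyle\sum_{j \neq \ell, min}\vct{x}_j,\; \vct{x}_\ell - \vct{x}_{min}\rangle,
\]
so by Cauchy--Schwarz together with $\|\vct{x}_\ell - \vct{x}_{min}\| \leq \|\vct{x}_\ell\| + \|\vct{x}_{min}\|$ the desired inequality reduces to the concentration bound $\|\sum_{j \neq \ell, min}\vct{x}_j\| \leq \alpha \sqrt{n-2}$. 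Spherical symmetry makes $\mathbb{E}\vct{x}_j = \vct{0}$; integrating the tail hypothesis gives $\mathbb{E}\|\vct{x}_j\|^2 \leq 1/2$; and isotropy then bounds the variance of $\langle \vct{x}_j, \vct{u}\rangle$ by $1/(2d)$ in any fixed unit direction. Bernstein's inequality applied to the $n-2$ bounded, mean-zero, independent scalar summands delivers the tail $e^{-\alpha^2}$, with the range assumption $\alpha \leq (3/2)\sqrt{(n-2)/(2d)}$ being precisely what is needed for the sub-Gaussian term in the Bernstein exponent to dominate the linear term. A union bound over $\ell$ then costs at most a factor $n-1$.

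Statement (3) is a corollary of (1) with $\ell = *$: the left-hand side becomes nonpositive by medoid optimality, so the second inequality of (1) rearranges to $(n-2)(\|\vct{x}_*\| - \|\vct{x}_{min}\|)(\|\vct{x}_*\| + \|\vct{x}_{min}\|) \leq 2\alpha\sqrt{n-2}(\|\vct{x}_*\| + \|\vct{x}_{min}\|)$. Dividing through by the common factor (and handling the trivial $\vct{x}_* = \vct{x}_{min} = \vct{0}$ case separately) and then invoking (2) yields $\|\vct{x}_*\| \leq \|\vct{x}_{min}\| + 2\alpha/\sqrt{n-2} \leq 3\alpha/\sqrt{n-2}$. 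Summing failure probabilities across all three parts gives at most $ne^{-\alpha^2}$ overall.

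The main obstacle is that the random index $min$ depends on the entire sample, so $\sum_{j \neq \ell, min}\vct{x}_j$ is not a sum over a deterministic index set on which Bernstein may be applied directly. The cleanest workaround is to bound $\|\sum_{j=1}^n \vct{x}_j\|$ uniformly by a vector Bernstein inequality and then absorb the two omitted norm-at-most-$1$ terms into an additive constant: this decouples the randomness of $min$ from the concentration, at the cost of a small constant loss that is harmless once $n$ is even modestly large. An alternative---decomposing over the possible values of $min$, conditioning on the pair $(\vct{x}_\ell, \vct{x}_{min})$, and applying scalar Bernstein to the $n-2$ remaining independent terms---gives tighter constants but is marginally more bookkeeping.
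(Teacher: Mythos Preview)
Your treatment of statements 2 and 3 is correct; your derivation of 3 (cancel the common factor $\|\vct{x}_*\|+\|\vct{x}_{min}\|$ and add the bound from 2) is in fact a touch cleaner than the paper's contradiction argument, though equivalent.

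For statement 1 there is a genuine gap. After the algebraic identity you reduce, via Cauchy--Schwarz, to the \emph{norm} bound $\bigl\|\sum_{j\neq\ell,min}\vct{x}_j\bigr\|\leq\alpha\sqrt{n-2}$, and then invoke scalar Bernstein on one-dimensional projections with variance $1/(2d)$. These two steps do not fit together: scalar Bernstein on a single projection does not control the vector norm, and the norm of the sum concentrates around $\sqrt{(n-2)\,\mathbb{E}\|\vct{x}_1\|^2}\asymp\sqrt{n}$ regardless of $\alpha$, so the asserted norm bound cannot hold with the stated probability across the full range of $\alpha$. The Cauchy--Schwarz step discards precisely the factor $\sqrt{d}$ that you later try to recover through the $1/(2d)$ variance. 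The paper never takes this detour: writing the cross term as $-2\|\vct{x}_\ell-\vct{x}_{min}\|\sum_{j\neq\ell,min}y_j$ with $y_j=\langle\vct{x}_j,\vct{u}\rangle$ and $\vct{u}=(\vct{x}_\ell-\vct{x}_{min})/\|\vct{x}_\ell-\vct{x}_{min}\|$, one only needs the \emph{scalar} bound $\sum_{j\neq\ell,min} y_j\leq\alpha\sqrt{2(n-2)/d}$, after which $d\geq2$ and the triangle inequality finish the job.

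The dependency obstacle you flag is real, but isotropy resolves it directly, without either of your workarounds. Condition on all the norms $\|\vct{x}_1\|,\dots,\|\vct{x}_n\|$: this determines which index is $min$, while the directions $\vct{x}_j/\|\vct{x}_j\|$ remain i.i.d.\ uniform on the sphere and independent of the norms. Hence for $j\neq\ell,min$ the projection $y_j$ is conditionally independent of $\vct{u}$ (a function of $\vct{x}_\ell,\vct{x}_{min}$ only) and of the other $y_{j'}$, with conditional variance $\|\vct{x}_j\|^2/d\leq 1/d$. Scalar Bernstein then applies conditionally and gives the $e^{-\alpha^2}$ tail uniformly in the conditioning. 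Your workaround 1 (vector Bernstein on $\sum_{j=1}^n\vct{x}_j$) still targets a norm bound and so still loses the $\sqrt{d}$; your workaround 2 points in the right direction, but conditioning only on the event ``index $m$ is $min$'' distorts the remaining norm distribution, whereas conditioning on all norms at once is the clean formulation.
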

\begin{proof}
First prove statement 1. Note that for $\ell \in [n]$,
\begin{align}
 & -(n-2)\left(\|  {\vct{x}}_{\ell} \|^2 - \|  {\vct{x}}_{min} \|^2\right) + \sum_{j=1}^n \left(\| {\vct{x}}_j - {\vct{x}}_{\ell} \|^2 - \| {\vct{x}}_j - {\vct{x}}_{min} \|^2\right) \\
 =& -(n-2)\left(\|  {\vct{x}}_{\ell} \|^2 - \|  {\vct{x}}_{min} \|^2\right) + \sum_{j\neq \ell, j\neq min}\left(\| {\vct{x}}_j - {\vct{x}}_{\ell} \|^2 - \| {\vct{x}}_j - {\vct{x}}_{min} \|^2\right) \\
 =& -2\|\vct{x}_{\ell} - {\vct{x}}_{min} \| \sum_{j\neq \ell, j\neq min} \vct{y}_j \,. \label{identity}
\end{align}
Above, $\vct{y}_j = \scalprod{\vct{x}_j}{\left(\vct{x}_{\ell} - {\vct{x}}_{min}\right)/\|\vct{x}_{\ell} - {\vct{x}}_{min}\|}$. Since the  ${\vct{x}}_i$ are drawn from an isotropic distribution, the direction of the unit vector $\left(\vct{x}_{\ell} - {\vct{x}}_{min}\right)/\|\vct{x}_{\ell} - {\vct{x}}_{min}\|$ is independent of $\|\vct{x}_{\ell} - {\vct{x}}_{min}\|$ and drawn uniformly at random. It follows that the $\vct{y}_j$ for $j \neq min, j\neq \ell$ are i.i.d.\  zero-mean random variables despite how  ${\vct{x}}_{min}$ depends on the  ${\vct{x}}_j$. Indeed, for $j \neq min, j\neq \ell$,
$$
\mathbf{Var}(\vct{y}_j) = \mathbb{E}\left(|\scalprod{\vct{x}_j}{\left(\vct{x}_{\ell} - \vct{x}_{min}\right)/\|\vct{x}_{\ell} - \vct{x}_{min}\| }|^2\right) = \mathbb{E} \left(\| {\vct{x}}_j \|^2 \cos^2 \theta\right)= 1/(2d)\,,
$$
where the last equality is obtained by integrating in generalized spherical coordinates. Bernstein's inequality thus gives
\begin{align}\label{bernlock}
\mathbf{Prob} \left( \sum_{j\neq \ell,  j\neq min} \vct{y}_j  > \alpha \sqrt{\frac{2\left(n-2\right)}{d}} \right) &\leq e^{-\alpha^2}\,, \quad  0 < \alpha \leq \frac{3}{2}\sqrt{\frac{n-2}{2d}}.
\end{align}
So $\sum_{i=1}^n \vct{y}_i$ is bounded from above with high probability given \eqref{bernlock}. Further, $\|\vct{x}_{\ell}- {\vct{x}}_{min}\| \leq \|\vct{x}_{\ell} \| + \| {\vct{x}}_{min}\|$ from the triangle inequality. These facts together with \eqref{identity} imply that {\it for a given $\ell \neq min$},
\begin{align}\label{forconsolid}
&\sum_{j=1}^n \left(\| {\vct{x}}_j - {\vct{x}}_{\ell} \|^2 - \| {\vct{x}}_j - {\vct{x}}_{*} \|^2\right) \\ \geq &\sum_{j=1}^n \left(\| {\vct{x}}_j - {\vct{x}}_{\ell} \|^2 - \| {\vct{x}}_j - {\vct{x}}_{min} \|^2\right) \mbox{  (by definition of a medoid)}\\
\geq &(n-2)\left(\|  {\vct{x}}_{\ell} \|^2 - \|  {\vct{x}}_{min} \|^2\right) -2\alpha \sqrt{\frac{2\left(n-2\right)}{d}}\left(\|\vct{x}_{\ell} \| + \|\vct{x}_{min}\|\right)\\
\geq &(n-2)\left(\|  {\vct{x}}_{\ell} \|^2 - \|  {\vct{x}}_{min} \|^2\right) -2\alpha \sqrt{n-2}\left(\|\vct{x}_{\ell}\| + \|\vct{x}_{min}\|\right)\mbox{  (for $d \geq 2$)}
\end{align}
 with probability exceeding $1 - e^{-\alpha^2}$. For $\ell =min$, the inequalities above clearly hold with unit probability. Take a union bound over the other $\ell \in [n]$ to obtain that statement 1 holds with probability exceeding $1 - \left(n-1\right) e^{-\alpha^2}$ (for valid $\alpha$ as specified in \eqref{bernlock}). 

Now observe that 
$$
\mathbf{Prob}(\| {\vct{x}}_{min} \| > \alpha n^{-1/2}) = \big( 1 - \alpha^{2} n^{-1}  \big)^n < e^{- \alpha^{2}}\,.
$$
It follows that statement 2 holds with probability exceeding $1 - e^{-\alpha^2}$. Moreover, statements 1 and 2 together hold with probability exceeding $1 - n e^{-\alpha^2}$. Condition on them, and prove statement 3 by contradiction: suppose that $\| {\vct{x}}_{*} \|$ exceeds $3 \alpha (n-2)^{-1/2}$. Then because $\| {\vct{x}}_{min} \| \leq \alpha n^{-1/2} < \alpha (n-2)^{-1/2}$,
\begin{equation}
\left(n-2\right) \left(\| {\vct{x}}_{*} \|^2 - \| {\vct{x}}_{min} \|^2\right)- 2 \alpha \sqrt{n-2} \left( \| {\vct{x}}_{*} \| +  \| {\vct{x}}_{min} \| \right) > 8 \alpha^2  - 8\alpha^2 = 0\,.
\end{equation}
But from statement 1 for $\ell=min$, this implies that
\begin{equation}
\sum_i \| {\vct{x}}_j - {\vct{x}}_{*}  \|^2 > \| {\vct{x}}_{j} - {\vct{x}}_{min}  \|^2\,,
\end{equation}
which violates the assumption that  ${\vct{x}}_{*}$ is a medoid. So all three statements hold with probability exceeding $1 - n e^{-\alpha^2}$, which is the content of the lemma.
\end{proof}

\noindent We now write the main result of this section.  
\begin{theorem}{\bf (Restatement of Theorem \ref{thm1}.)}\label{main}
Consider $k$ unit balls in $d$-dimensional Euclidean space (with $d \geq 2$) for which the centers of any two balls are separated by a distance of $3.75+\varepsilon$, $\varepsilon \geq 0$.  From each ball, draw $n$ points $\vct{x}_1, \vct{x}_2, \dots, \vct{x}_n$ as independent samples from an isotropic distribution supported in the ball which satisfies
\begin{equation}
\label{assumebeginner2} 
{\bf Prob} ( \| \vct{x}_i \| \geq r ) \leq 1-r^2, \quad 0 \leq r \leq 1.
\end{equation}
Suppose that squared distances are used to measure dissimilarities between points, $w_{ij} = \| \vct{x}_i - \vct{x}_j \|^2$.    For each $\varepsilon \geq 0$, there exist values of $n$ and $k$ for which the following statement holds: with probability exceeding $1-4k/n$, the unique optimal solution to each of $k$-medoids clustering and its linear programming relaxation assigns the points in each ball to their own cluster.
\end{theorem}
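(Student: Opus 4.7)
The plan is to apply Corollary~\ref{usedcorollary} with the canonical choice $u = \max_{i \in [N]} \max_{j \in S_{M(i)}} N_i (w_{ij} - w_{i, M(i)})$ and the conjectured clustering that assigns each ball to its own cluster, so $N_i = n$ throughout. First I would apply Lemma~\ref{lem:bern} to each of the $k$ balls with $\alpha = \sqrt{2\log n}$ and union-bound across balls; this costs $kn\, e^{-\alpha^2} = k/n$ in the probability budget of $4k/n$, leaving a $3k/n$ reserve for auxiliary Bernstein-type concentration estimates used below. On the resulting good event each medoid sits within $\epsilon_M := 3\alpha/\sqrt{n-2} = O(\sqrt{\log n / n})$ of its ball's center, each minimum-norm point within $\alpha/\sqrt n$ of the same, and statement~1 of the lemma yields, in every ball, the useful lower bound $\sum_i (w_{i\ell} - w_{i, M(\ell)}) \geq (n-2)(\|\vct{y}_\ell\|^2 - \|\vct{y}_{\min}\|^2) - 2\alpha\sqrt{n-2}(\|\vct{y}_\ell\| + \|\vct{y}_{\min}\|)$, where $\vct{y}_i$ denotes $\vct{x}_i$ shifted so that its ball is centered at the origin.

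The upper bound \eqref{finalTijgreaterthan0} is the easy half. Since all points lie in unit balls, $w_{ij} - w_{i,M(i)} \leq 4$, so $u \leq 4n$. On the other hand, for $i$ in any ball $A$ the second-nearest medoid lies in a different ball whose center is at distance $\geq 3.75 + \varepsilon$ from that of $A$, so $w_{i, M(i,2)} \geq (2.75 + \varepsilon - \epsilon_M)^2$ while $w_{i,M(i)} \leq (1 + \epsilon_M)^2$; for $n$ large enough that $\epsilon_M$ is small, $N_i(w_{i,M(i,2)} - w_{i,M(i)}) \geq n\bigl((2.75 + \varepsilon - \epsilon_M)^2 - (1+\epsilon_M)^2\bigr) > 4n$, verifying \eqref{finalTijgreaterthan0}.

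The heart of the proof is the summation inequality \eqref{bigsuffcond}, which must hold for every non-medoid $j$. For its right-hand side I would invoke the Lemma~\ref{lem:bern} bound inside $j$'s ball $A$, obtaining the linear-in-$n$ lower bound $(n-2)(\|\vct{y}_j\|^2 - \|\vct{y}_{\min, A}\|^2) - 2\alpha\sqrt{n-2}(\|\vct{y}_j\| + \|\vct{y}_{\min, A}\|)$. For the left-hand side I would decompose across the $k-1$ outer balls and, in each outer $B$, expand $w_{ij} = \|c_B - c_A\|^2 + 2\langle c_B - c_A, \vct{y}_i - \vct{y}_j\rangle + \|\vct{y}_i - \vct{y}_j\|^2$. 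Isotropy makes $\{\langle c_B - c_A, \vct{y}_i\rangle : i \in B\}$ i.i.d., bounded, and mean-zero, so Bernstein (with the same $\alpha$) concentrates $\sum_{i \in B} w_{ij}$ within $O(R\sqrt{n \log n})$ of $n\|c_B - c_A\|^2 + n\|\vct{y}_j\|^2 + n\,\mathbb{E}\|\vct{y}_i\|^2$, with failure absorbed into the $3k/n$ reserve. Combining with $u/n \leq 4$, $w_{i, M(i)} \leq (1+\epsilon_M)^2$, and $\|c_B - c_A\| \geq 3.75$ shows that the summands $u/N_i + w_{i, M(i)} - w_{ij}$ are strongly negative on average, so their positive parts contribute only an $O(\sqrt{n \log n})$ remainder, dominated by the $\Theta(n)$ right-hand side whenever $\|\vct{y}_j\|$ is not vanishingly close to $\|\vct{y}_{\min, A}\|$; the complementary case is handled separately, since then $j$ is essentially its own medoid and both sides of \eqref{bigsuffcond} are small in a compatible way.

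I expect the main obstacle to be the tight accounting needed to bring the required separation down to exactly $3.75$ rather than to a value only slightly below $4$. The worst-case non-medoid $j$ has $\|\vct{y}_j\|$ close to $1$, which simultaneously shrinks the right-hand side slack (the excess $\|\vct{y}_j\|^2 - \|\vct{y}_{\min, A}\|^2$ is at most $1$) and threatens to make a handful of left-hand side summands positive (some outer-ball $\vct{x}_i$ may happen to land unusually close to $\vct{x}_j$). Controlling that atypical mass forces a delicate balancing of the Bernstein tail $O(R\sqrt{n\log n})$, the deterministic bound $u/n \leq 4$, and the separation constant; any loose inequality risks pushing the admissible $R$ back toward $4$. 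All other parts of the argument are routine Bernstein-plus-union-bound bookkeeping.
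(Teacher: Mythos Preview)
Your overall strategy---invoking Corollary~\ref{usedcorollary} with the ball partition, applying Lemma~\ref{lem:bern} per ball with $\alpha=\sqrt{2\log n}$, and disposing of \eqref{finalTijgreaterthan0} first---matches the paper. The gap is in your treatment of the left-hand side of \eqref{bigsuffcond}. You concentrate $\sum_{i\in B} w_{ij}$ via Bernstein and then assert that, because the summands $a_i := u/n + w_{i,M(i)} - w_{ij}$ are ``strongly negative on average,'' their positive parts total only $O(\sqrt{n\log n})$. That inference is invalid: a negative mean for $a_i$ places no bound on $\sum_i (a_i)_+$. And in this instance the conclusion is false. For the worst-case non-medoid $j$ (with $\|\vct{y}_j\|$ near $1$ and pointing toward an outer ball $B$), a constant-probability region of $\vct{y}_i$'s---those lying in an outer shell of $B$ and oriented back toward $j$'s ball---yields $a_i>0$ with value $\Theta(1)$, so the left-hand side is $\Theta(n)$, not $O(\sqrt{n\log n})$. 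Inequality \eqref{bigsuffcond} is therefore a genuine $\Theta(n)$-versus-$\Theta(n)$ comparison of constants, and that comparison is precisely what produces the threshold $3.75$.

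The paper supplies the missing mechanism. It first observes (its Case~1) that whenever $\|\vct{y}_j\|\le R-1-2\sqrt{1+\rho}$ every summand $a_i$ is deterministically nonpositive, so the LHS vanishes; this is your ``complementary case,'' but handled exactly rather than heuristically. For the remaining $j$ (Case~2), it notes that $a_i$ can be positive only if $\|\vct{y}_i\|>R-1-2\sqrt{1+\rho}$, and uses the distributional hypothesis \eqref{assumebeginner2} together with Hoeffding to bound the count $n_{\mathrm{outer}}$ of such points in each outer ball by roughly $n\bigl(1-(R-1-2\sqrt{1+\rho})^2\bigr)$. Multiplying $n_{\mathrm{outer}}$ by the deterministic per-term maximum yields an LHS bound of order $(k-1)\cdot 0.41\,n$ at $R=3.75$, to be compared against the RHS lower bound from statement~1 of Lemma~\ref{lem:bern} evaluated at $\|\vct{y}_j\|=R-1-2\sqrt{1+\rho}$, of order $0.56\,n$. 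Your Bernstein-on-the-sum step must be replaced by this count-of-positive-summands argument; without it the proof does not close, and in particular no separation constant below $4$ can be extracted.
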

\begin{remark}\label{tableremark}
A table of valid combinations of $\varepsilon$, $n$, and $k$ (for $d \leq \frac{9}{8}\frac{n-2}{\log n}$) follows.
\begin{center}
  \begin{tabular}{| c | c | c |}
    \hline
$\varepsilon$ & $n$ & $k$ \\ \hline
 $\geq 0$ & $\geq 10^6$ & $2$ \\ \hline
 $\geq 0.05$ & $\geq 10^7$ & $\leq 3$ \\ \hline
$\geq 0.15$ & $\geq 10^4$ & $2$ \\ \hline
 $\geq 0.15$ & $\geq 10^7$ & $\leq 10$ \\ \hline
  \end{tabular}
\end{center}
More such combinations may be obtained by satisfying the inequalities \eqref{secondRconstraint}, \eqref{thirdRconstraint}, and \eqref{theend} in the proof below.  
\end{remark}

\begin{proof}
Condition on the $k$ events that the three statements of Lemma \ref{lem:bern} hold for each ball. Choose $\alpha=\sqrt{2\log n}$ so that the probability these events occur together exceeds $1 - k/n$. Because $\alpha$ is bounded by Lemma \ref{lem:bern}, this requires 
\begin{equation}\label{firstnconstraint}
d \leq \frac{9}{8}\left(\frac{n-2}{\log n}\right)\,.
\end{equation}
Now simplify the sufficient condition of Corollary \ref{usedcorollary} with $w_{ij} = d_{ij}^2$ and every $N_i = n$. Let
$$
\rho = 3\sqrt{\frac{2 \log n}{n-2}}
$$
be the maximum distance of a medoid to the center of its respective ball from statement 3 of Lemma \ref{lem:bern}.
Note that
$$
u=\max_{i \in [N]} \max_{j \in S_{M(i)}} n (d^2_{ij} - d^2_{i,M(i)}) \leq n\left(4 - (1-\rho)^2\right)\,,
$$
where the upper bound is surmised by considering point $M(i)$ collinear with and between points $i$ and $j$, both on one ball's boundary. So take $u = n\left(4-\left(1-\rho\right)^2\right)$ to narrow the sufficient condition of Corollary \ref{usedcorollary}. Also note the requirement \eqref{finalTijgreaterthan0}:
\begin{equation}\label{anotherrequire}
u < \min_{i \in [N]}  \left(d_{i, M(i,2)}^2 - d_{i, M(i)}^2\right)\,.
\end{equation}
Obtain a lower bound of $\left(R-(1+\rho)\right)^2-(1+\rho)^2$ on the RHS by considering a point $i$ on the boundary of one ball collinear with points $M(i)$ and $M(i, 2)$. Impose
\begin{equation}\label{firstRconstraint}
(R-(1+\rho))^2-(1+\rho)^2 > 4 - (1-\rho)^2
\end{equation}
to ensure the RHS exceeds the LHS. This is equivalent to
\begin{equation}\label{secondRconstraint}
R > 1+\rho+2\sqrt{1+\rho}\,.
\end{equation}

Given the stipulations of the previous paragraph and \eqref{bigsuffcond} of Corollary \ref{usedcorollary}, the following holds: each of $k$-medoids clustering and its LP relaxation has a unique optimal solution that assigns the points in each ball to their own cluster if for $j \notin \mathcal{M}$,
\begin{equation}\label{sublime}
\sum_{i \notin S_{M(j)}}\big(\| \vct{x}_i - {\vct{x}}_{M(i)}\|^2 - \| \vct{x}_i - {\vct{x}}_{j}\|^2 + 4-(1-\rho)^2\big)_+ < \sum_{i \in S_{M(j)}}\left(\| \vct{x}_i - {\vct{x}}_{j}\|^2 - \| \vct{x}_i - {\vct{x}}_{M(i)}\|^2\right)\,.
\end{equation}
Denote as $C(j)$ the center of the ball associated with point $j$.\footnote{This becomes a slight abuse of notation because $C(j)$ is not an index of any point drawn, but all its usages contained here should be clear.} Find conditions under which \eqref{sublime} holds by treating two complementary cases of the  ${\vct{x}}_j$ separately:
\begin{enumerate}
\item $d_{j, C(j)} \leq R - 1 - 2\sqrt{1+\rho}$. Then for $i \notin S_{M(j)}$, consider point $i$ collinear with and between points $M(i)$ and $j$ to obtain
\begin{equation}\label{detupperbnd}
\begin{aligned}
&\| \vct{x}_i - {\vct{x}}_{M(i)}\|^2 - \| \vct{x}_i - {\vct{x}}_{j}\|^2 + 4-(1-\rho)^2 \\
\leq &\left(1+\rho\right)^2 - \left(R-2 + 1 - (R - 1 - 2\sqrt{1+\rho})\right)^2 + 4-(1-\rho)^2\\
= &\,0.
\end{aligned}
\end{equation}
It follows that the LHS of \eqref{sublime} has an upper bound that vanishes. Moreover, the RHS of \eqref{sublime} must be positive by definition of a (unique) medoid. So \eqref{sublime} holds with unit probability.
\item $R - 1 - 2\sqrt{1+\rho} < d_{j, C(j)} \leq 1$. First, bound the number $n_{outer}$ of points in a given cluster for which the inequalities spanning the previous sentence hold. From the distribution \eqref{assumebeginner},
\begin{equation}
{\bf Prob}\left(R - 1 - 2\sqrt{1+\rho} < d_{j, C(j)} \leq 1\right) = \left(1 - \left(R-1-2\sqrt{1+\rho}\right)^2\right)_+\,.
\end{equation}
Hoeffding's inequality thus gives
\begin{equation}
{\bf Prob}\left(n_{outer} \geq n\left(1 - \left(R-1-2\sqrt{1+\rho}\right)^2\right)_+ +n\tau\right) \leq e^{-2n\tau^2}\,.
\end{equation}
Take $\tau = \sqrt{\dfrac{\log n}{2n}}$ to obtain
\begin{equation}\label{bndonnouter}
{\bf Prob} \left(n_{outer} < n\left(1 - \left(R-1-2\sqrt{1+\rho}\right)^2\right)_+ +\sqrt{(n/2)\log n}\right) > 1 - \frac{1}{n}\,.
\end{equation}
Condition on the event captured in the equality above holding for every cluster. This occurs with probability exceeding $1 - k/n$. Next, observe that for $i \notin S_{M(j)}$, considering (as for \eqref{detupperbnd}) point $i$ collinear with and between points $M(i)$ and $j$ gives the deterministic bound
\begin{equation}
\big(\| \vct{x}_i - {\vct{x}}_{M(i)}\|^2 - \| \vct{x}_i - {\vct{x}}_{j}\|^2 + 4-(1-\rho)^2\big)_+ \leq \left(1+\rho\right)^2 - \left(R-2\right)^2+4-\left(1-\rho\right)^2\,.
\end{equation}
Combine this with the bound on $n_{outer}$ from \eqref{bndonnouter} to find that for all $j \notin \mathcal{M}$, the LHS of \eqref{sublime} obeys
\begin{equation}\label{concentrate0}
\begin{aligned}
\sum_{i \notin S_{M(j)}}\big(d_{i, M(i)}^2 - d_{ij}^2 + 4-&(1-\rho)^2\big)_+ < \left(k-1\right)\left(\left(1+\rho\right)^2 - \left(R-2\right)^2+4-\left(1-\rho\right)^2\right)\\
&\times\left(n\left(1 - \left(R-1-2\sqrt{1+\rho}\right)^2\right)_+ +\sqrt{(n/2)\log n}\right)\,.
\end{aligned}
\end{equation}
Statement 1 of Lemma \ref{lem:bern} bounds from below the RHS of \eqref{sublime}:
\begin{equation}
\label{concentrate1}
\begin{aligned}
&\sum_{i \in S_{M(j)}} \big(d_{ij}^2 - d_{i, M(i)}^2 \big) \\
\geq &\left(n-2\right) \| {\vct{x}}_j \|^2 - 2  \sqrt{2\left(n-2\right)\log(n)} \| {\vct{x}}_j \|  - 2\sqrt{\frac{n-2}{n}}\left(2+\sqrt{\frac{n-2}{n}}\right) \log(n),
\end{aligned}
\end{equation}
where  ${\vct{x}}_j$ extends from the center of the ball corresponding to $S_{M(j)}$. The expression on the RHS has a minimum at $\| \vct{x}_j\| = \rho/3$. Because $1 \geq d_{j, C(j)} > R - 1 - 2\sqrt{1+\rho}$, provided
\begin{equation}\label{thirdRconstraint}
R - 1 - 2\sqrt{1+\rho} > \rho/3,
\end{equation}
a lower bound on the LHS of \eqref{concentrate1} is given by
\begin{equation}
\label{concentrate2}
\begin{aligned}
&\sum_{i \in S_{M(j)}} \big(d_{ij}^2 - d_{i, M(i)}^2 \big) \\
\geq &\left(n-2\right)\left(\min\left\{R - 1 - 2\sqrt{1+\rho}, 1\right\}\right)^2 - 2  \sqrt{2\left(n-2\right)\log(n)} \min\left\{R - 1 - 2\sqrt{1+\rho}, 1\right\} \\
&- 2\sqrt{\frac{n-2}{n}}\left(2+\sqrt{\frac{n-2}{n}}\right) \log(n)\,.
\end{aligned}
\end{equation}
Combining \eqref{concentrate0} and \eqref{concentrate2} provides a sufficient condition for \eqref{sublime}:
\begin{equation}\label{theend}
\begin{aligned}
&\left(n\left(1 - \left(R-1-2\sqrt{1+\rho}\right)^2\right)_+ +\sqrt{(n/2)\log n}\right)\\
&\times \left(K-1\right)\left(\left(1+\rho\right)^2 - \left(R-2\right)^2+4-\left(1-\rho\right)^2\right)\\
\leq &\left(n-2\right)\left(\min\left\{R - 1 - 2\sqrt{1+\rho}, 1\right\}\right)^2 \\
&- 2  \sqrt{2\left(n-2\right)\log(n)} \min\left\{R - 1 - 2\sqrt{1+\rho}, 1\right\} \\
&- 2\sqrt{\frac{n-2}{n}}\left(2+\sqrt{\frac{n-2}{n}}\right) \log(n)\,.
\end{aligned}
\end{equation}
 It is easily verified numerically that this inequality is satisfied when $R \geq 3.75$ for $n \geq 10^6$---as are the other bounds \eqref{secondRconstraint} and \eqref{thirdRconstraint} on $R$. Further, for any dimension $d$, there exists some finite $n$ large enough such that \eqref{firstnconstraint} is satisfied. Similar checks may be performed to obtain other valid combinations of the parameters; more such combinations are contained in Remark \ref{tableremark}.
\end{enumerate}

In the proof above, the sole events conditioned on are that Lemma \ref{lem:bern} holds for each cluster and that the Hoeffding inequality \eqref{bndonnouter} for $n_{outer}$ holds for each cluster. As recorded in the theorem, the probability all of these events occur exceeds $1-2k/n$. All components of the theorem are now proved.
\end{proof}

\section{Simulations}

Consider $k$ nonoverlapping $d$-dimensional unit balls in $\mathbb{R}^d$ for which the separation distance between the centers of any two balls is \textit{exactly} $R$. Consider the two cases that follow, referenced later as Case 1 and Case 2.
\begin{enumerate}
\item Each ball is the support of a uniform distribution.
\item Each ball is the support of a distribution that satisfies 
\begin{equation}\label{distsim}
{\bf Prob} ( \| \vct{x} \| \geq r ) = 1-r^2, \quad 0 \leq r \leq 1,
\end{equation}
where $r$ is the Euclidean distance from the center of the ball, and $\vct{x}$ is some vector in $\mathbb{R}^d$. For $d=2$, this is a uniform distribution. Equation \eqref{distsim} saturates the inequality \eqref{distassumption}, which is the distributional assumption of our probabilistic recovery guarantees.
\end{enumerate}
Given one of these cases, sample each of the $k$ distributions $n$ times so that $n$ points are drawn from each ball. Solve \textsc{LinKMed} for this configuration of points and record when 
\begin{enumerate}
\item a solution to \textsc{KMed} is recovered. (Call this ``cluster recovery.'')
\item a recovered solution to \textsc{KMed} places points drawn from distinct balls in distinct clusters, the situation for which our recovery guarantees apply. (Call this ``ball recovery,'' a sufficient condition for cluster recovery.)
\end{enumerate}
Examples of ball recoveries and cluster recoveries that are not ball recoveries are displayed in Figure \ref{failgrid} for $k=2, 3$.

\begin{figure}[!ht]
\begin{center}
\includegraphics[width=.75\paperwidth]{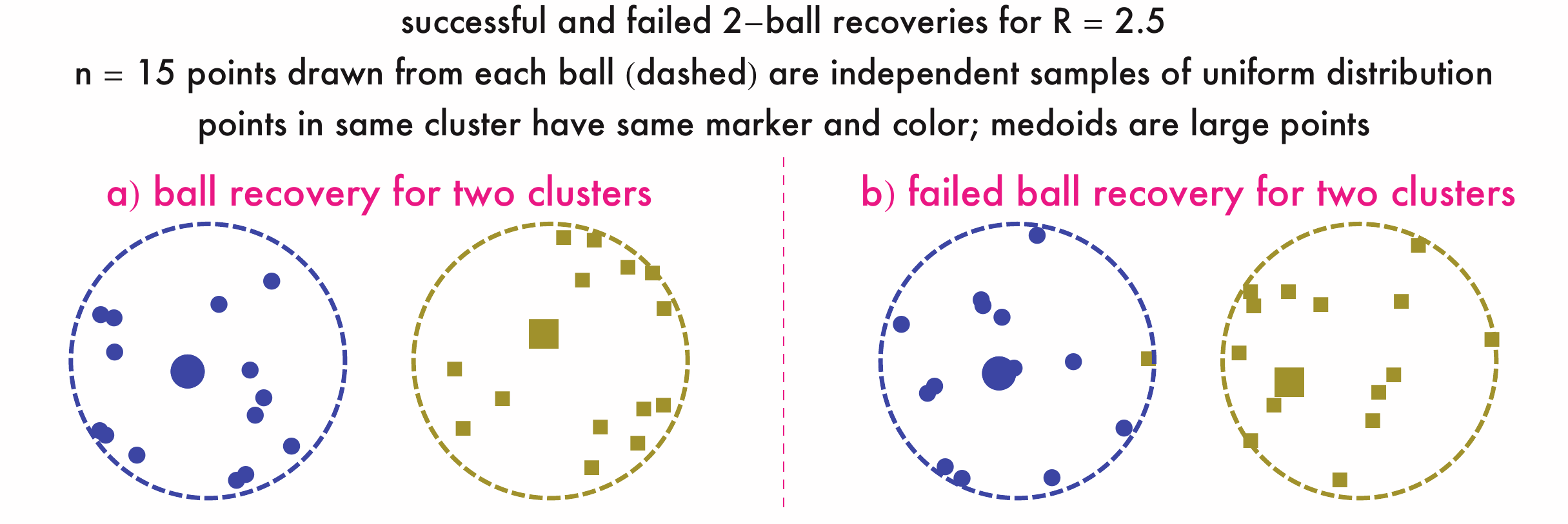}\\
\vspace{-.08in}
\includegraphics[width=.75\paperwidth]{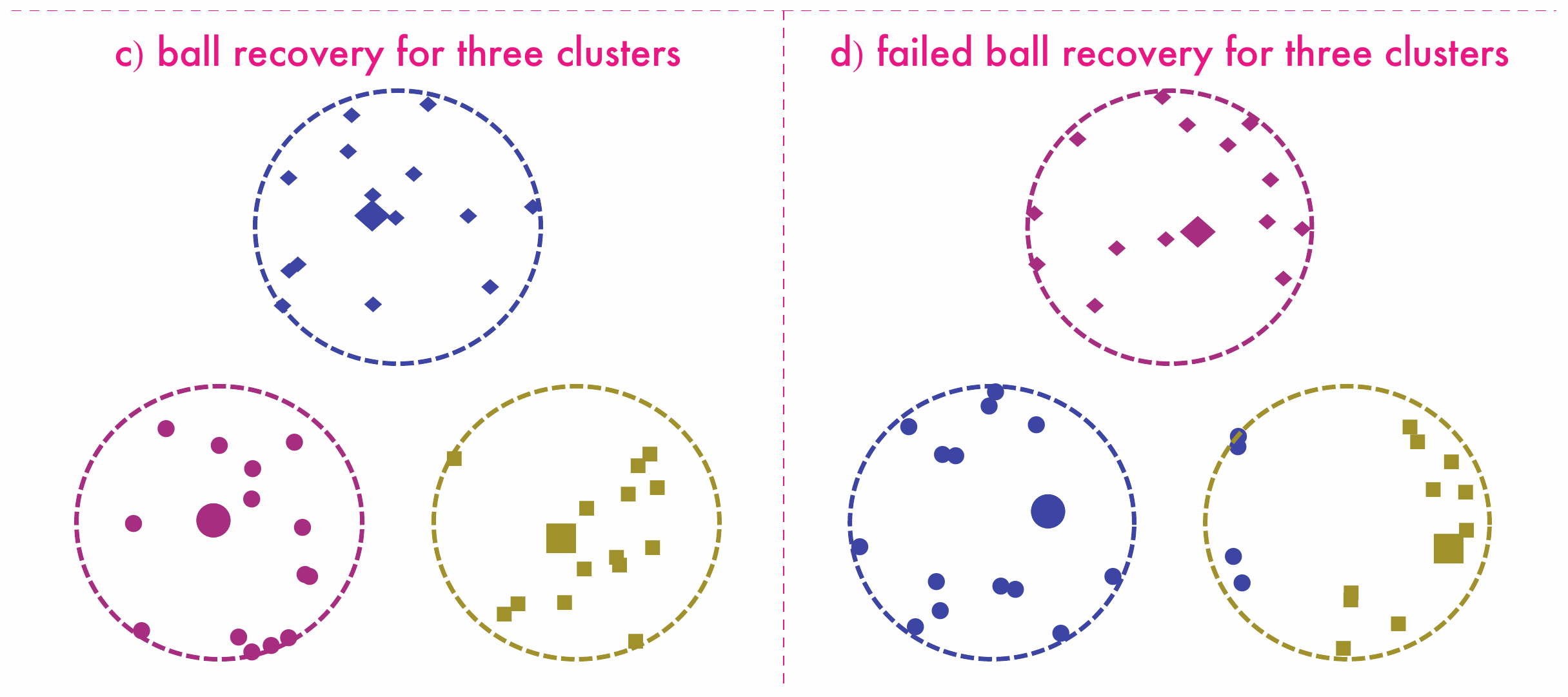} 
\end{center}
\vspace{-.2in}\caption{In the failed ball recovery for two clusters b), just one point in the left ball is not placed in the same cluster as the other points in the ball. In the failed ball recovery for three clusters d), four points in the bottom right ball are not placed in the same cluster as the other points in the ball.}
\label{failgrid}
\end{figure}
\clearpage

We performed $1000$ such simulations using MATLAB in conjunction with Gurobi Optimizer 5.5's barrier method implementation for every combination of the choices in the table below.\\\\
\begin{centering}
\begin{tabular}{|r|l|}
  \hline
  $n$ & 5, 10, 15, 20, 25, 30 \\
  $k$ & 2, 3 \\
  $R$ & 2, 2.2, 2.4, 2.6, 2.8, 3, 3.2, 3.4, 3.6, 3.8, 4, 4.2, 4.4, 4.6, 4.8, 5 \\ 
  $d$ & 2, 3, 4, 10 \\
  Cases & 1, 2 \\
  \hline
\end{tabular}\\
\end{centering}
\vspace{.13in}
Remarkably, cluster recovery failed no more than 12 (8) times out of 1000 across all sets of 1000 simulations for Case 1 (2). It therefore appears that high-probability cluster recovery is always realized when drawing samples from the distributions we consider. However, since the KKT conditions require some assumption about how points cluster, general cluster recovery guarantees are difficult to prove. In the previous section, we obtain guarantees assuming the points cluster into the balls from which they are drawn. The ball recovery results of our simulations for Cases 1 and 2 are depicted in, respectively, Figures \ref{unigrid} and \ref{r2grid}. Note that the vertical axis of each plot measures the number of \textit{failed} ball recoveries. We conclude this section with the following observations.
\begin{itemize}
\item On the whole, Case 2 yields more ball recoveries than Case 1. This is not unexpected: with the exception of $d=2$, Case 2 concentrates more probability mass towards the centers of the balls than does Case 1, typically making the points drawn from each ball cluster more tightly. For $d=2$, the plots in both Figures \ref{unigrid} and \ref{r2grid} correspond to draws from uniform distributions supported in the balls; they are repetitions and thus look essentially the same.
\item In general, as the number $n$ of points drawn from each ball is increased, the number of ball recoveries increases for fixed $d$, $k$, and $R$. This is again not unexpected: if fewer points are drawn, clustering is more susceptible to outliers that can prevent ball recovery.
\item As $R$ increases, the number of ball recoveries increases for fixed $d$, $k$, and $n$ because points drawn from different balls tend to get further apart. For $d=2$, high-probability ball recovery appears to be guaranteed for $R$ greater than somewhere between $2$ and $3$ even for the small values of $n$ considered here. This is considerably better than the guarantee of Theorem \ref{main}: it holds for $d=2$ and $R = 3.75$ only if $n$ is at least $10^6$, as shown toward the end of its proof.
\item For $n$, $R$, and $d$ fixed, there are more ball recoveries for two balls than there are for three balls. This suggests that as $k$ increases, the probability of recovery decreases, which is consistent with intuition from Theorem \ref{main}.
\item For $n$, $R$, and $k$ fixed, as $d$ increases, the number of ball recoveries increases, even for the uniform distributions of Case 1. There is thus substantial room for improving our recovery guarantees, which require concentrating more probability mass towards the centers of the balls as $d$ increases.
\end{itemize}
\clearpage

\begin{figure}[H]
\noindent\makebox[\textwidth]{\includegraphics[width=.75\paperwidth]{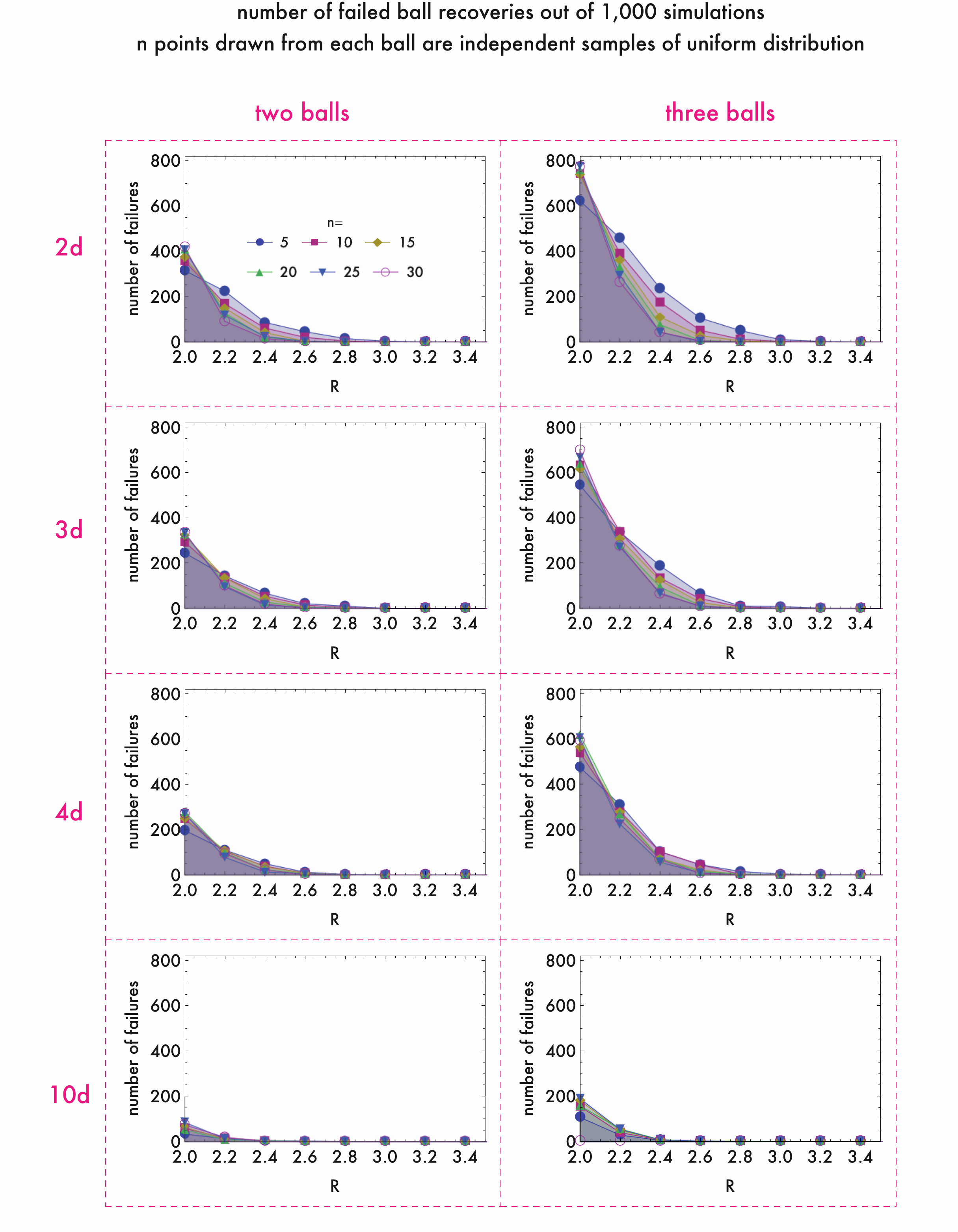}}
\vspace{-.4in}\caption{Exact recovery appears to be guaranteed with high probability for uniform distributions for values of $n$ in the double digits and values of $R$ below $3$. This is substantially better than Theorem \ref{main} suggests.}
\label{unigrid}
\end{figure}

\begin{figure}[H]
\noindent\makebox[\textwidth]{\includegraphics[width=.75\paperwidth]{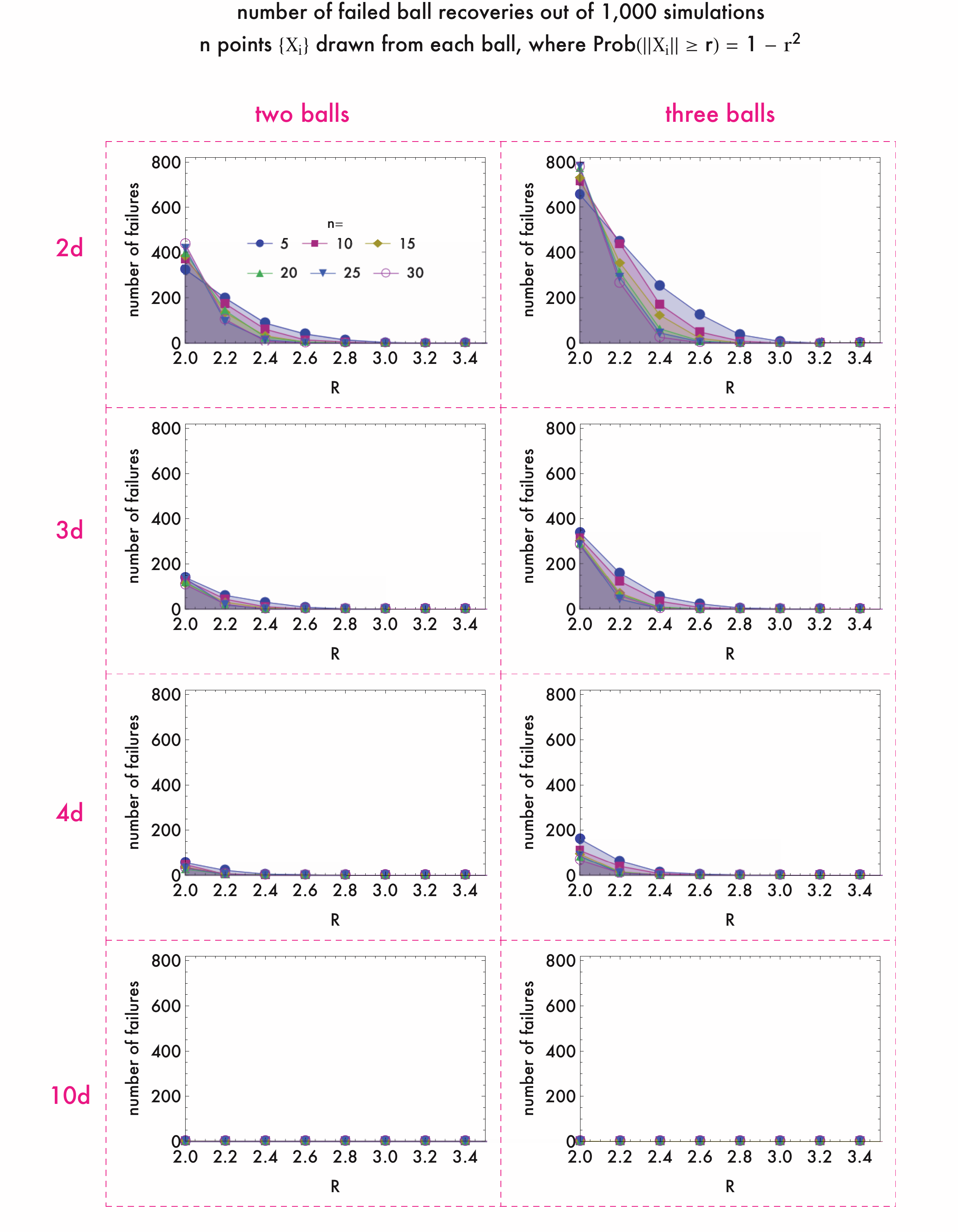}}
\vspace{-.4in}\caption{The $d=2$ plots here and in Figure \ref{unigrid} are repetitions of the same set of simulations. As $d$ increases, the number of exact recoveries increases faster than it does for uniform distributions.}
\label{r2grid}
\end{figure}
 
\section{Concluding remarks}

We proved that with high probability, the $k$-medoids clustering problem and its LP relaxation share a unique globally optimal solution in a nontrivial regime, where two points in the same cluster may be further apart than two points in different clusters. However, our theoretical guarantees are preliminary; they fall far short of explaining the success of LP  in distinguishing points drawn from different balls at small separation distance and with few points in each ball. More generally, in simulations we did not present here, the $k$-medoids LP relaxation appeared to recover integer solutions for very extreme configurations of points---in the presence of extreme outliers as well as for nonisotropic clusters with vastly different numbers of points. We thus conclude with a few open questions that interest us.
\begin{itemize}
\item How do recovery guarantees change for different choices of the dissimilarities between points---for example, for Euclidean distances rather than for the squared Euclidean distances used here? What about for Gaussian and exponential kernels?
\item Can exact recovery be used to better characterize outliers?
\item Is it possible to obtain cluster recovery guarantees instead of just ball recovery guarantees? (``Cluster recovery'' and ``ball recovery'' are defined right after \eqref{distsim}.)
\end{itemize} 
\section*{Acknowledgements}
We thank Shi Li and Chris White for helpful suggestions. We are extremely grateful to Sujay Sanghavi for offering his expertise on clustering and for pointing us in the right directions as we navigated the literature. A.N. is especially grateful to Jun Song for his constructive suggestions and for general support during the preparation of this work. R.W. was supported in part by a Research Fellowship from the Alfred P. Sloan Foundation, an ONR Grant N00014-12-1-0743, an NSF CAREER Award, and an AFOSR Young Investigator Program Award. A.N. was supported by Jun Song's grant R01CA163336 from the National Institutes of Health.

\clearpage
\newpage

\clearpage
\newpage
\section*{Appendix: derivation of Proposition \ref{dualcert}}

\begin{proposition}\label{dualcertrestatement}
{\bf (Restatement of Proposition \ref{dualcert}.)} \textsc{LinKMed} has a unique solution $\mathbf{z} = \mathbf{z}^\#$ that coincides with the solution to \textsc{KMed} if and only if there exist some $u$ and $\vct{\lambda} \in \mathbb{R}^N$ such that
\begin{align}
u  >  \sum_{i=1}^N \left(\lambda_i - w_{ij} + w_{i, M(i)}\right)_+, \quad j \notin {\cal M} \nonumber \\
\sum_{i \in S_j} \lambda_i = u\,,\quad j \in {\cal M} \label{dual_cond}\\ 
0 \leq \lambda_i < w_{i, M(i, 2)} - w_{i, M(i)},\quad i \in [N]\,. \nonumber
\end{align}
\end{proposition}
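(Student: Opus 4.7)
The plan is to interpret Proposition \ref{dualcert} as a disguised KKT certificate for LinKMed evaluated at the integral candidate $\mathbf{z}^\#$ with $z^\#_{i, M(i)} = 1$. I set up the Lagrangian with multipliers $\alpha_i \in \mathbb{R}$ for each row-sum equality, $u \geq 0$ for the cardinality constraint, $\beta_{ij} \geq 0$ for each constraint $z_{ij} \leq z_{jj}$, and $\gamma_{ij} \geq 0$ for nonnegativity. Stationarity then gives $\gamma_{ij} = w_{ij} - \alpha_i + \beta_{ij}$ for $i \neq j$ and $\gamma_{jj} = u - \alpha_j - \sum_{i \neq j} \beta_{ij}$ on the diagonal (setting $\beta_{jj} = 0$ for the trivial constraint). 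The substitution $\lambda_i := \alpha_i - w_{i, M(i)}$ will put the KKT system into precisely the form of the three conditions stated.

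For the sufficiency direction, given $u$ and $\vct{\lambda}$ from the hypothesis I construct $\alpha_i = \lambda_i + w_{i, M(i)}$ and then $\beta_{i, M(i)} = \lambda_i$ for $i \notin \mathcal{M}$, $\beta_{ij} = 0$ when $j \in \mathcal{M}$ and $i \notin S_j$, and $\beta_{ij} = (\lambda_i + w_{i, M(i)} - w_{ij})_+$ when $j \notin \mathcal{M}$. Condition 3 makes every $\beta$ nonnegative. Condition 2 yields $\gamma_{jj} = 0$ at each medoid, and condition 1 yields $\gamma_{jj} > 0$ at each non-medoid diagonal index. The strict upper bound in condition 3 yields $\gamma_{i, M(i, 2)} > 0$, because $M(i, 2) \in \mathcal{M}$ with $i \notin S_{M(i, 2)}$ forces $\beta_{i, M(i, 2)} = 0$. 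The remaining complementary slackness relations hold by construction, so $\mathbf{z}^\#$ is LP-optimal and $\gamma_{ij} > 0$ strictly at every zero entry of $\mathbf{z}^\#$. Uniqueness then follows because any other optimum $\mathbf{z}'$ satisfies $\gamma_{ij} z'_{ij} = 0$ against the same dual; strict positivity of $\gamma_{ij}$ forces $z'_{ij} = 0$ wherever $z^\#_{ij} = 0$, and the row-sum equalities then force $\mathbf{z}' = \mathbf{z}^\#$.

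For the necessity direction, assume $\mathbf{z}^\#$ is the unique LinKMed optimum. Strong LP duality supplies dual variables $(\alpha, u, \beta, \gamma)$ satisfying ordinary KKT complementary slackness. Setting $\lambda_i := \alpha_i - w_{i, M(i)}$, the identity $\gamma_{i, M(i)} = 0$ (forced by $z^\#_{i, M(i)} = 1$) gives $\lambda_i = \beta_{i, M(i)} \geq 0$ for $i \notin \mathcal{M}$, which is the lower half of condition 3. Complementary slackness on $\beta$ zeroes out $\beta_{ij}$ whenever $j \in \mathcal{M}$ and $i \notin S_j$ (since $z^\#_{ij} < z^\#_{jj}$), so $\gamma_{jj} = 0$ at a medoid collapses to condition 2, $u = \sum_{i \in S_j} \lambda_i$. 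The strict inequalities come from uniqueness via the Goldman-Tucker strict complementarity theorem: equality in condition 1 at some $j \notin \mathcal{M}$ would allow $z_{jj}$ to be infinitesimally activated at zero marginal cost, and $\lambda_i = w_{i, M(i, 2)} - w_{i, M(i)}$ would allow a similar costless activation of $z_{i, M(i, 2)}$; either alternative would contradict uniqueness.

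The main obstacle will be the uniqueness-to-strictness upgrade in the necessity direction: because $\mathbf{z}^\#$ is heavily degenerate (many active nonnegativity constraints), the textbook ``strict complementarity iff uniqueness'' fact does not apply naively. The route I intend to take is to exploit the fact that the equality row-sum constraints uniquely pin down a primal solution from its support pattern, so any two optima have the same support, and then Goldman-Tucker guarantees a dual solution certifying strict complementarity on every inactive entry; reading the strict inequalities off of this dual then completes the argument.
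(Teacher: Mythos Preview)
Your route is essentially the paper's: both set up the Lagrangian/KKT system for \textsc{LinKMed} at the integral candidate and then read off conditions on $u$ and $\lambda$. The paper first eliminates the variables $z_{i,M(i)}$ via the row-sum equalities so that the reduced optimum sits at $\mathbf{0}$, and then demands every component of the reduced Lagrangian gradient be strictly positive; your substitution $\lambda_i = \alpha_i - w_{i,M(i)}$ is exactly what falls out once you undo that elimination, and your Goldman--Tucker argument for the necessity direction is a cleaner justification than the paper's somewhat terse ``if and only if'' gradient claim.

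There is, however, a concrete gap in your sufficiency argument. With your choice $\beta_{ij} = (\lambda_i + w_{i,M(i)} - w_{ij})_+$ for $j \notin \mathcal{M}$, the off-diagonal slack $\gamma_{ij} = w_{ij} - \alpha_i + \beta_{ij}$ equals $0$ whenever $\lambda_i + w_{i,M(i)} \geq w_{ij}$, so it is \emph{not} true that $\gamma_{ij} > 0$ at every zero entry of $\mathbf{z}^\#$. Consequently the sentence ``strict positivity of $\gamma_{ij}$ forces $z'_{ij} = 0$ wherever $z^\#_{ij} = 0$'' does not go through as stated. The fix is routine: you do get $\gamma_{jj} > 0$ for every $j \notin \mathcal{M}$ from condition~1, which forces $z'_{jj} = 0$; the primal constraint $z'_{ij} \leq z'_{jj}$ then kills all of column $j$. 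Combined with $\gamma_{ij} > 0$ for $j \in \mathcal{M}$, $j \neq M(i)$ (which you did verify via condition~3), the row sums pin down $\mathbf{z}' = \mathbf{z}^\#$. Alternatively, and this is closer to what the paper does, you can take each $\beta_{ij}$ strictly above $(\lambda_i + w_{i,M(i)} - w_{ij})_+$; the strict inequality in condition~1 leaves exactly the slack needed for this to still close.
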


\begin{proof}
Suppose the solution to \textsc{KMed} ${\bf z}= {\bf z}^\#$ is known. Let $\Omega$ be the index set of nonzero entries of ${\bf z}^\#$, and let $\Omega^c$ be its complement. For some matrix $\mathbf{m}$, denote as $\mathbf{m}_{\Omega^c}$ the vector of $N \times (N-1)$ variables $m_{ij}$ for which $(i, j) \in \Omega_c$. Eliminating the $z_{i, M(i)}$ from \textsc{LinKMed} using the constraints (\ref{LinKMed1}) yields the following equivalent program:
\begin{align}
\min_{{\bf z_{\Omega^c}} \in \mathbb{R}^{N \times N}} \, & \sum_{i=1}^N \sum_{j\neq M(i)} p_{ij} z_{ij} \label{LinKMedObj-2} \\
\mbox{s.t. } &z_{ij} \leq z_{jj}, \quad i \in [N]; j \notin M; i \neq j\label{LinKMed2-2}\\
&\sum_{i\notin {\cal M}} z_{ii} - \sum_{i \in {\cal M}} \sum_{j \neq i} z_{ij} \leq 0 \label{LinKMed2-2a}\\
&x_{ij} \leq 1 - \sum_{\ell \neq j} z_{j\ell }, \quad j \in M; i \notin S_j \label{LinKMed3-2}\\
&\sum_{\ell \neq M(i)} z_{M(i),\ell} \leq \sum_{\ell \neq M(i)} z_{i\ell}, \quad i \notin M \label{LinKMed4-2}\\
&\sum_{\ell \neq M(i)} z_{i\ell} \leq 1, \quad i \in [N] \label{LinKMed5-2} \\
& z_{ij} \geq 0, \quad (i,j) \in \Omega^c\label{LinKMed6-2}\,,
\end{align}
where $p_{ij}\equiv w_{ij} - w_{i, M(i)}$. The only $z_{ij}$ in the program (\ref{LinKMedObj-2})-(\ref{LinKMed6-2}) have $(i, j) \in \Omega^c$. Associate the \textit{nonnegative} dual variables $\theta_{ij}$, $u$, $\gamma_{ij}$, $\lambda_{i}$, $s_i$, and $L_{ij}$ with (\ref{LinKMed2-2}), (\ref{LinKMed2-2a}), (\ref{LinKMed3-2}), (\ref{LinKMed4-2}), (\ref{LinKMed5-2}), and (\ref{LinKMed6-2}), respectively. Enforcing stationarity of the Lagrangian gives
\begin{align}
p_{ij} - u + \theta_{ij} + \sum_{\ell \in S_i, \ell \notin {\cal M}} \lambda_{\ell} + s_i - L_{ij} = 0, \quad i \in {\cal M}; j \notin {\cal M} \label{KKT1}\\
p_{ij} + \theta_{ij} - \lambda_i + s_i - L_{ij} =  0, \quad i \notin {\cal M}; j \notin {\cal M}; i \neq j \label{KKT2}\\
p_{jj} + u - \sum_{\ell \neq j}^N \theta_{\ell j} - \lambda_j + s_i - L_{jj} =  0, \quad j \notin {\cal M}; i = j \label{KKT3}\\
p_{ij} - u + \gamma_{ij} + \gamma_{ji} + \sum_{\ell \in S_i, \ell \notin {\cal M}} \lambda_{\ell} + s_i - L_{ij} =  0, \quad i \in {\cal M}; j \in {\cal M}, i \neq j \label{KKT4}\\
p_{ij} - \lambda_i + \gamma_{ij} + \gamma_{ji} + s_i - L_{ij} = 0, \quad i \notin {\cal M}; j \in {\cal M}; j \neq M(i). \label{KKT5}
\end{align}

Call the primal Lagrangian $f(\mathbf{z}_{\Omega^c})$.  Above, the quantities on the lefthand sides of the equalities are components of $\nabla_{\mathbf{z}_{\Omega^c}}f(\mathbf{z}_{\Omega^c})$. Because ${\bf z}_{\Omega^c}^\# = \mathbf{0}$, complementary slackness of (\ref{LinKMed3-2}) and (\ref{LinKMed5-2}) gives that $\gamma_{ij} = 0$ and $s_i = 0$ where a medoid solution is exactly recovered. The $L_{ij}$ are merely slack variables. Uniqueness of the solution $\mathbf{z}_{\Omega^c} = \mathbf{z}_{\Omega^c}^{\#}$ occurs if and only if for any feasible perturbation $\mathbf{h}_{\Omega^c}$ of $\mathbf{z}_{\Omega^c}^\#$, 
$$
f(\mathbf{z}_{\Omega^c}^{\#} + \mathbf{h}_{\Omega^c}) \geq f(\mathbf{z}_{\Omega^c}^{\#}) + \scalprod{\nabla_{\mathbf{z_{\Omega^c}}}f(\mathbf{z}_{\Omega^c}^{\#})}{\mathbf{h}_{\Omega^c}} > f(\mathbf{z}_{\Omega^c}^{\#})\,.
$$
Because the feasible solution set includes only nonnegative $\mathbf{z}_{\Omega^c}^\#$, any feasible perturbation $\mathbf{h}_{\Omega^c}$ away from $\mathbf{z} = \mathbf{z}_{\Omega^c}^{\#} = \mathbf{0}$ must be nonnegative with at least one positive component. Demanding that every component of $\nabla_{\mathbf{z}_{\Omega^c}}f(\mathbf{z}_{\Omega^c})$---that is, each LHS of \eqref{KKT1}-\eqref{KKT5}---is positive thus simultaneously satisfies the KKT conditions and guarantees solution uniqueness. More precisely, \textsc{LinKMed} has a unique solution $\mathbf{z} = \mathbf{z}^\#$ that coincides with the solution to \textsc{KMed} if and only if there exist $u, \lambda_i, \theta_{ij} \in \mathbb{R}$ that satisfy
\begin{align}
p_{ij} - u + \theta_{ij} + \sum_{\ell \in S_i, \ell \notin {\cal M}} \lambda_{\ell} > 0, \quad i \in {\cal M}; j \notin {\cal M} \label{KKT1strict}\\
p_{ij} + \theta_{ij} - \lambda_i > 0, \quad i \notin {\cal M}; j \notin {\cal M}; i \neq j \label{KKT2strict}\\
p_{jj} + u - \sum_{\ell \neq j}^N \theta_{\ell j} - \lambda_j > 0, \quad j \notin {\cal M}; i = j \label{KKT3strict}\\
p_{ij} - u + \sum_{\ell \in S_i, \ell \notin {\cal M}} \lambda_{\ell} > 0, \quad i \in {\cal M}; j \in {\cal M}, i \neq j \label{KKT4strict}\\
p_{ij} - \lambda_i > 0, \quad i \notin {\cal M}; j \in {\cal M}; j \neq M(i). \label{KKT5strict}
\end{align}

Assigning each $\theta_{ij}$ its minimum possible value minimizes the restrictiveness of (\ref{KKT3strict}). From (\ref{KKT1strict}), the minimum possible value of $\theta_{i \in M, j}$ approaches $\left(u- p_{ij}-\sum_{k \in S_i, \ell \notin {\cal M}} \lambda_{\ell} \right)_+$ from above. From (\ref{KKT2strict}), the minimum possible value of $\theta_{i\notin M, j}$ approaches $\left(\lambda_i - p_{ij}\right)_+$ from above. Inserting these values of $\theta_{ij}$ into the conditions above gives 
\begin{align}
p_{jj} + u - \lambda_j - \sum_{i \notin {\cal M}, i \neq j} \left(\lambda_i - p_{ij}\right)_+ +\sum_{i \in {\cal M}} \left(u - p_{ij}-\sum_{\ell \in S_i, \ell \notin {\cal M}} \lambda_{\ell} \right)_+ > 0, \quad j \notin {\cal M} \label{KKT1strict1}\\
\lambda_i < p_{ij},\quad i \notin {\cal M}; j \in {\cal M} \label{KKT2strict1}\\
 u - \sum_{\ell \in S_i, \ell \notin {\cal M}} \lambda_{\ell} < p_{ij}, \quad i \in {\cal M}; j \in {\cal M}, i \neq j \label{KKT3strict1}\,.
\end{align}
There exist $u, \lambda_i$ that satisfy \eqref{KKT1strict1}-\eqref{KKT3strict1} if and only if there exist $u, \lambda_i, \theta_{ij}$ that satisfy \eqref{KKT1strict}-\eqref{KKT5strict}.

Since $\lambda_j - p_{jj}$ is nonnegative, it can be absorbed into the sum over $i \notin {\cal M}$ in \eqref{KKT1strict1}:
$$
 u - \sum_{i \notin {\cal M}} \left(\lambda_i - p_{ij}\right)_+ +\sum_{i \in {\cal M}} \left(u - p_{ij}-\sum_{\ell \in S_i, \ell \notin {\cal M}} \lambda_{\ell} \right)_+ > 0, \quad j \notin {\cal M}\,.
$$
Define
$$
\lambda_{i} = u - \sum_{\ell \in S_i, \ell \notin {\cal M}} \lambda_{\ell} \,,\quad i \in {\cal M}
$$
to recover the content of the proposition.
\end{proof}

\end{document}